\documentclass[sigconf]{acmart}
\usepackage{bm}
\usepackage{amsfonts}
\usepackage{textcomp}

\usepackage[most]{tcolorbox}
\tcbuselibrary{breakable}
\usepackage[table,xcdraw]{xcolor}
\usepackage{tabularx}
\usepackage[utf8]{inputenc}
\usepackage[linesnumbered,ruled,vlined]{algorithm2e}
\usepackage{soul}
\usepackage{float}
\usepackage{enumitem}
\usepackage{lipsum}   

\newcommand{\methodName}{SymCA}
\newcommand{\reveal}{REVEAL+}
\newcommand{\tabemb}{TabEmb}

\usepackage{url}

\usepackage{amsthm}
\usepackage{multirow}
\usepackage{makecell}
\usepackage{subcaption}
\usepackage{listings}
\usepackage{xcolor}
\newsavebox{\promptdesignbox}

\lstdefinestyle{prompt}{
  basicstyle=\ttfamily\scriptsize,
  breaklines=true,
  breakatwhitespace=false,
  columns=fullflexible,
  keepspaces=true,
  showstringspaces=false,
  frame=single,
  rulecolor=\color{gray!55},
  backgroundcolor=\color{gray!7},
  xleftmargin=5pt,
  xrightmargin=3pt,
  aboveskip=4pt,
  belowskip=2pt,
  breakindent=8pt,
}

\usepackage[utf8]{inputenc}
\usepackage{colortbl}
\usepackage[linesnumbered,ruled,vlined]{algorithm2e}

\usepackage{pifont}

\usepackage{enumitem}

\newtheorem{definition}{\textbf{Definition}}

\newtheorem{example}{\textbf{Example}}

\newtheorem{lemma}{\textbf{Lemma}}

\makeatletter
\patchcmd{\proof}
  {\topsep}
  {3pt \@plus 0.1ex \@minus 0.1ex}
  {}{}
\patchcmd{\endproof}
  {\topsep}
  {3pt \@plus 0.1ex \@minus 0.1ex}
  {}{}
\makeatother

\setcopyright{none}
\renewcommand\footnotetextcopyrightpermission[1]{}

\begin{document}

\title{Interpretable Column Annotation with LLM-Symbolized Decision Process Materialization}
\author{Mengqi Wang}
    \orcid{0009-0008-2900-7413}
    \affiliation{%
      \institution{University of New South Wales}
      \city{Sydney}
      \state{NSW}
      \country{Australia}
    }
    \email{mengqi.wang4@unsw.edu.au} 
    
\author{Jianwei Wang}
    \orcid{0009-0000-7887-4179}
    
    \affiliation{%
      \institution{University of New South Wales}
      \city{Sydney}
      \state{NSW}
      \country{Australia}
    }
    \email{jianwei.wang1@unsw.edu.au}
    \authornote{Jianwei Wang is the corresponding author. }
    
\author{Qing Liu}
    \orcid{0000-0001-7895-9551}

    \affiliation{%
      \institution{Data61, CSIRO}
      \city{Sandy Bay}
      \state{Tasmania}
      \country{Australia}
    }
 \email{q.liu@data61.csiro.au}
    
\author{Xiwei Xu}
  \orcid{0000-0002-2273-1862}
  \affiliation{%
      \institution{Data61, CSIRO}
        \city{Eveleigh}
        \state{NSW}
        \country{Australia}
      }
  \email{xiwei.xu@data61.csiro.au}

\author{Zhenchang Xing}
  \orcid{0000-0001-7663-1421}
  \affiliation{%
      \institution{Data61, CSIRO}
        \city{Canberra}
        \state{ACT}
        \country{Australia}
      }
  \email{zhenchang.xing@data61.csiro.au}

\author{Michael Bain}
    \orcid{0000-0002-4309-6511}
    \affiliation{%
      \institution{University of New South Wales}
      \city{Sydney}
      \state{NSW}
      \country{Australia}
    }
    \email{m.bain@unsw.edu.au}

\author{Liming Zhu}
\orcid{0000-0001-5839-3765}
  \affiliation{%
      \institution{Data61, CSIRO}
        \city{Sydney}
        \state{NSW}
        \country{Australia}
      }
  \email{liming.zhu@data61.csiro.au}

\author{Wenjie Zhang}
    \orcid{0000-0001-6572-2600}
    \affiliation{%
      \institution{University of New South Wales}
      \city{Sydney}
      \state{NSW}
      \country{Australia}
    }
    \email{wenjie.zhang@unsw.edu.au}

\renewcommand{\shortauthors}{Mengqi Wang et al.}

\begin{abstract}
Column annotation (CA), including column type annotation (CTA) and column property annotation (CPA), aims to identify the meanings of table columns and the semantic relationships among them. 
Recent CA methods usually use various neural models to learn column representations and directly map them to label categories, thereby (1) sacrificing model interpretability and adaptivity, (2) overlooking rich label semantics and ultimately limiting accuracy.
To address these limitations, we propose \methodName{}, an LLM-empowered interpretable CA framework that materializes column annotation as a global-to-local symbolic decision process. \methodName{} consists of two components: (1) global skeleton induction, which constructs a semantic skeleton over the label space, and (2) local substrate evolution, which evolves predictive substrates within the skeleton.
Specifically, to exploit label semantics while preserving an interpretable decision process, the global skeleton induction module leverages LLMs to generate candidate hypernym-inspired tree-structured semantic skeletons and employs a Minimum Bayes Risk (MBR)-based consensus strategy to select a robust skeleton against generation variance.
Since different internal nodes require different evidence to distinguish among their child nodes, the local substrate evolution module materializes each internal node as an executable and evolvable predictive substrate. Over multiple evolution rounds, each substrate trains an interpretable random forest classifier with the current operator set, leverages the LLM to propose node-specific operator modifications, and uses an exploration--exploitation strategy to prioritize promising substrates.
Extensive experiments demonstrate that \methodName{} is accurate, robust, and interpretable, outperforming the strongest baselines by an average of 6.42\% in Micro-F1 and 11.03\% in Macro-F1.

\end{abstract}


    \maketitle

\begin{figure}[t]
  \centering
  \includegraphics[width=0.98\linewidth]{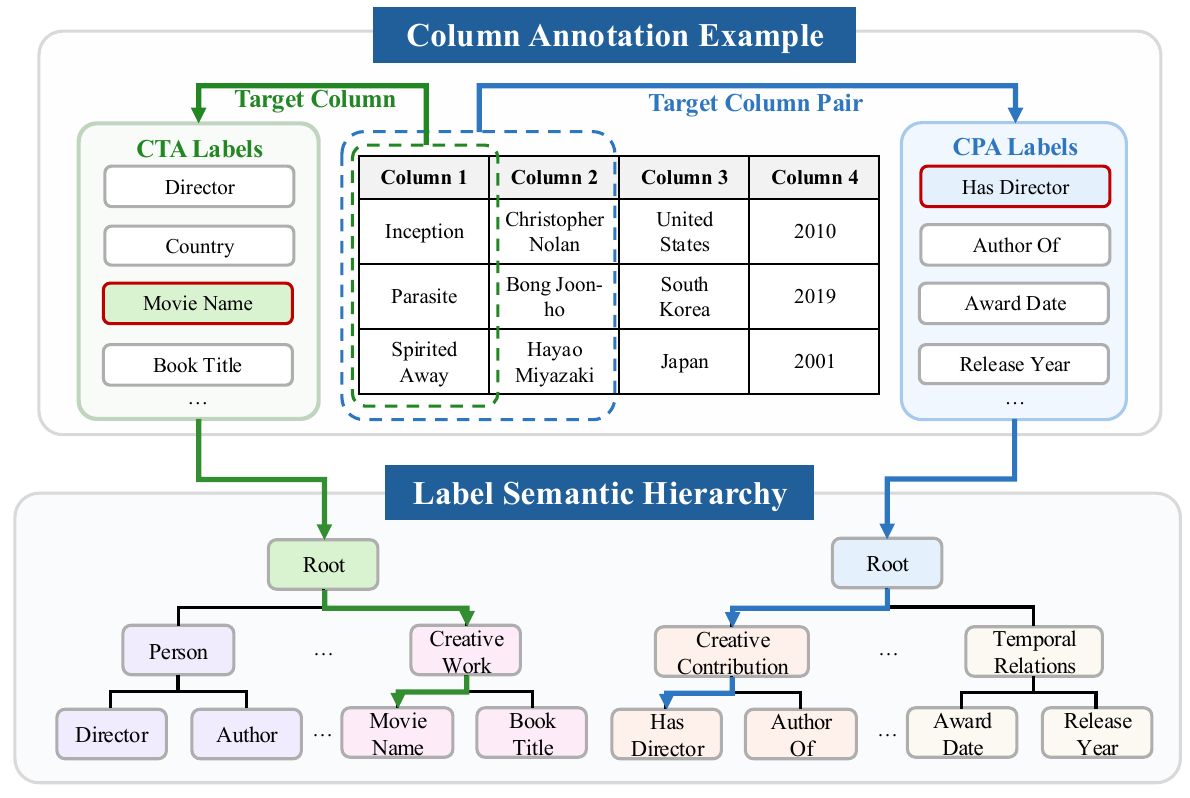}
    \caption{A motivating example of column annotation with a hypernym-inspired label semantic hierarchy.}
\label{fig:example}
\end{figure}

\section{Introduction}
\label{sec:Introduction}

Relational tables are a key format for organizing structured data across databases, spreadsheets, web tables, and enterprise data repositories. Their effective management and use across heterogeneous applications depend heavily on semantic metadata, such as column types and relationships between columns, which is often missing, incomplete or ambiguous in practice~\cite{chen2019colnet, ding2025retrieve}. Column annotation (CA) seeks to infer missing semantic metadata from table contents by mapping columns and their relationships to semantic classes and properties defined in widely recognized ontologies or vocabularies, such as DBpedia~\cite{lehmann2015dbpedia} and Schema.org~\cite{guha2016schema}. Two central CA tasks are column type annotation (CTA), which assigns a semantic type to a target column, and column property annotation (CPA), which identifies the semantic relation between an ordered pair of columns~\cite{deng2022turl,suhara2022annotating,miao2023watchog}. These annotations support downstream applications such as data integration, schema matching, table search, and data repair~\cite{deng2022turl,suhara2022annotating,miao2023watchog,liu2026hyperjoin, chen2026collaborative,chen2026empowering,sun2026lakehopper}.

\begin{example}
Figure~\ref{fig:example} illustrates the two main tasks in CA using a movie table. CTA assigns a semantic type to each target column; for example, the first column is annotated as \texttt{Movie Name}. CPA identifies the directed semantic relation between a pair of columns; for example, the relation from the first column to the second column is \texttt{Has Director}. These labels are semantically related rather than independent. For instance, \texttt{Person} is a hypernym of \texttt{Director} and \texttt{Author}, while \texttt{Temporal Relation} is a hypernym of \texttt{Award Date} and \texttt{Release Year}. Explicitly modeling these hypernym-inspired relations helps enable an interpretable decision process.
\end{example}

Given the importance of CA, a set of methods has been developed. 
Traditional knowledge-base-driven CA methods derive semantic annotations for individual columns and column pairs by linking cell values to entities in a reference knowledge base and exploiting the associated type and relation information~\cite{limaye2010annotating, efthymiou2017matching, luo2018cross}. However, suitable knowledge bases may be unavailable or provide limited coverage for real-world relational tables.
The advancement of language models has opened a new paradigm for CA. Through large-scale pretraining, language models acquire broad lexical and semantic knowledge and can contextualize heterogeneous table values, enabling them to infer latent column meanings even when explicit entity links or schema alignments are unavailable. Building on these capabilities, recent CA studies either infer semantic annotations directly through prompting large language models (LLMs)~\cite{feuer2024archetype,korini2023column,korini2024column} or learn contextual representations from table contents for downstream prediction~\cite{suhara2022annotating,ding2025retrieve,hoseinzade2026tabemb}.

\noindent \textbf{Motivations.} Existing methods still face two core limitations. 
First, existing methods provide limited interpretability and adaptivity. They typically encode table contents into latent representations and directly map them to label categories through globally parameterized predictors. As a result, the semantic evidence and intermediate decisions underlying each prediction remain largely implicit. Moreover, their predictive behavior is primarily improved through parameter optimization within a predefined representation, making it difficult to adapt the decision process to the semantic evidence required by different label groups.
Second, existing methods overlook the rich semantics of the label space and thus limit the accuracy of CA. They commonly treat labels as independent class identifiers, although many labels share broader semantic concepts and exhibit meaningful hierarchical relationships. Ignoring such structure prevents the prediction from being decomposed into interpretable semantic decisions and makes it more difficult to distinguish closely related labels within their appropriate semantic contexts.

\noindent \textbf{Challenges.}
Two challenges remain in leveraging rich label semantics to build an accurate, interpretable and adaptive CA method.

\textit{Challenge 1: Complexity of the label semantic space.}
The complexity of label semantics makes it challenging to design interpretable and accurate CA methods.
Label vocabularies are often inherited from heterogeneous ontology classes (as illustrated in Figure~\ref{fig:example}) and properties associated with Web tables, rather than designed as task-specific taxonomies. 
As a result, labels vary in semantic domain, ontological role, abstraction level, and annotation scope, while their boundaries and interrelations remain uneven or only partially explicit, leading to overlapping scopes, cross-domain dependencies, and non-uniform hierarchical structures.

\textit{Challenge 2: Label ambiguity in complex table contexts.}
The ambiguity of labels in complex table contexts makes it challenging to design interpretable and adaptive CA methods.
The same label may be instantiated by substantially different values across domains, sources, and representation conventions, whereas identical or similar values may correspond to different labels depending on their surrounding columns and relational roles. Consequently, neither individual values nor globally shared value patterns can uniquely determine the intended semantics. This ambiguity is further amplified when informative metadata, such as headers or table descriptions, is unavailable, leaving cross-column value patterns as the primary evidence for disambiguation. Thus, identifying the table-specific evidence that connects an observed value pattern to the appropriate semantic label remains a fundamental challenge.

\noindent \textbf{Our approaches.} To address these challenges, we propose \methodName{}, an LLM-empowered interpretable CA framework that materializes column annotation as a global-to-local symbolic decision process. \methodName{} comprises two stages: (1) \emph{global skeleton induction}, which organizes label semantics into a hierarchical skeleton; and (2) \emph{local substrate evolution}, which materializes this skeleton into an adaptive symbolic predictor through node-specific substrate evolution.

At the global level, to exploit the complex semantic knowledge embedded in labels while preserving decision interpretability, \methodName{} leverages the extensive world knowledge and reasoning capabilities of pretrained LLMs to infer hypernym-inspired relations and organize the labels into a tree-structured semantic skeleton, thereby decomposing global annotation into a sequence of localized semantic decisions. To reduce generation variance and prompt sensitivity, \methodName{} constructs multiple candidate skeletons through repeated LLM inference and applies a Minimum Bayes Risk (MBR)-based consensus strategy~\cite{kumar2004minimum} to select a robust skeleton with the lowest expected structural disagreement across the candidates. The resulting skeleton provides an explicit symbolic representation of the semantic structure of the label space.

At the local level, to turn the induced semantic skeleton into an executable CA predictor, we equip each internal node with a predictive substrate that performs node-specific semantic classification over its child nodes. To make this decision process interpretable, each substrate employs a random-forest classifier~\cite{breiman2001random} over explicit symbolic operators that capture semantic conditions specific to the corresponding node. Because different substrates require different evidence to distinguish among their child nodes, each substrate evolves its operator set through three stages. \methodName{} first trains and evaluates the current substrates to identify deficiencies in their local predictions, then leverages an LLM to propose node-specific operator modifications, and finally employs an exploration--exploitation strategy to prioritize promising substrates for efficient retraining.

In summary, this paper makes the following contributions:
\begin{itemize}
\item We introduce an LLM-empowered global-to-local CA framework for interpretable and adaptive column annotation, which materializes the annotation decision process through global skeleton induction and local substrate evolution.

\item We develop a global skeleton induction method that materializes implicit label semantics into a robust hypernym-inspired symbolic skeleton through multiple LLM-induced skeleton candidates and MBR-based consensus selection.

\item We develop a local substrate evolution method that materializes the induced skeleton into an evolvable symbolic predictor through random-forest-based substrate training, LLM-guided substrate modification, and exploration--exploitation-based substrate selection.

\item We conduct extensive experiments demonstrating that \methodName{} delivers accurate, robust, and interpretable annotation, outperforming the strongest baseline by an average of 6.42\% in Micro-F1 and 11.03\% in Macro-F1.

\end{itemize}

The source code for this paper is available at \url{https://github.com/T-Lab/SymCA}.
\section{Problem Statement}
\label{sec:problem}

Let $T = (c_1, c_2, \ldots, c_n)$ be a table with $n$ columns, where $c_i$ denotes the $i$-th column in the table. Each column consists of a
sequence of cell values, $c_i = (z_i^1, z_i^2, \ldots, z_i^m)$, where $z_i^j$ is the value of the $j$-th cell in column $c_i$. Following SOTA CA methods~\cite{ding2025retrieve}, we assume that metadata such as column headers and table names are unavailable, and infer semantic annotations solely from table contents.

\begin{definition}[Column Type Annotation]
Given a table $T$, a target column $c_i$, and a predefined column-type label
space $\mathcal{L}^{\mathrm{CTA}}$, column type annotation aims to predict the
semantic type of $c_i$. The task is to learn a model $F^{\mathrm{CTA}}(T, c_i) \in \mathcal{L}^{\mathrm{CTA}}$.
\end{definition}

\begin{definition} [Column Property Annotation]
Given a table $T$, an ordered target column pair $(c_i,c_j)$, and a predefined
column-property label space $\mathcal{L}^{\mathrm{CPA}}$, column property
annotation aims to predict the semantic property from $c_i$ to $c_j$. The task is to learn a model $F^{\mathrm{CPA}}(T, c_i, c_j) \in \mathcal{L}^{\mathrm{CPA}}$.
\end{definition}

\begin{figure*}[t]
  \centering
  \includegraphics[width=0.98\linewidth]{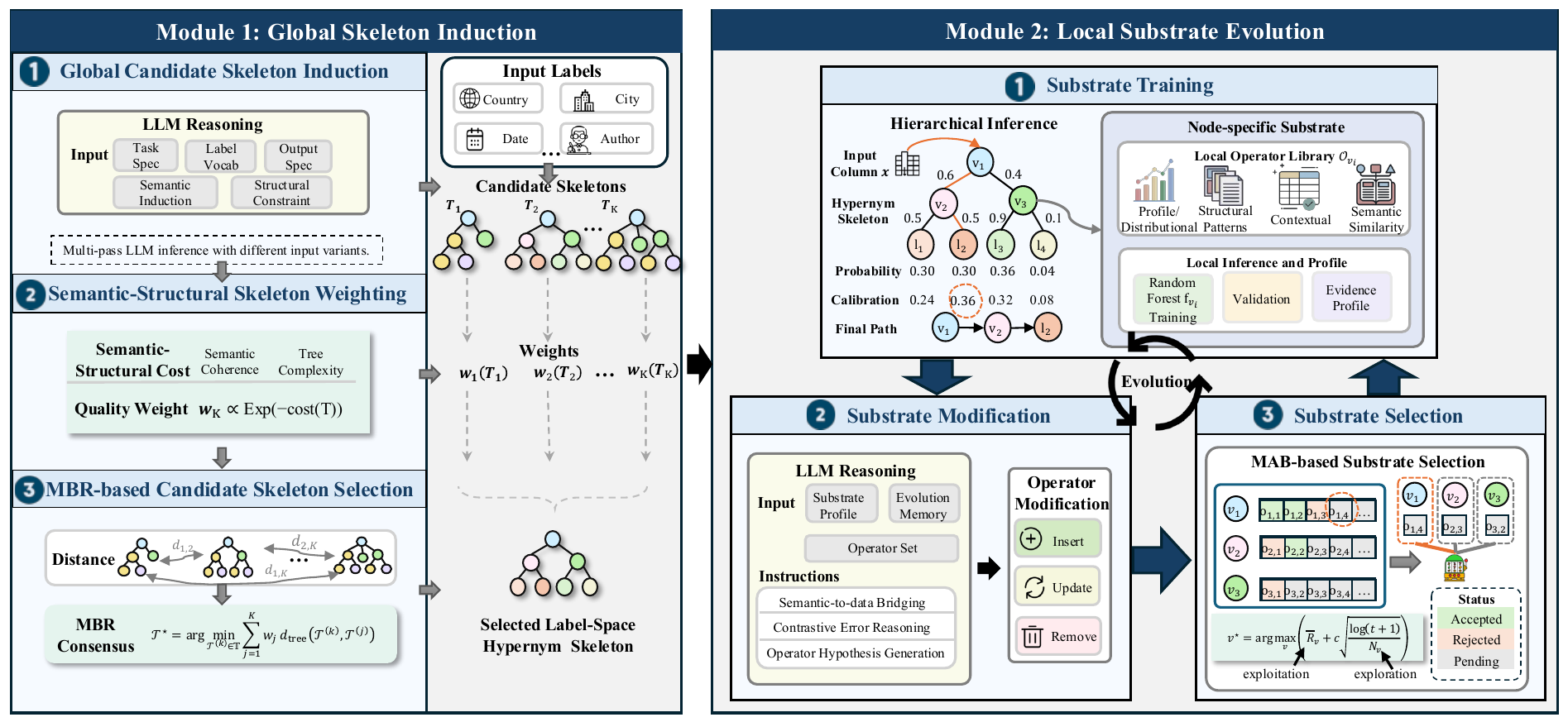}
  \caption{Overview of \methodName{}. Module~1 induces a global semantic skeleton over the label space, while Module~2 materializes its internal nodes as evolvable local predictive substrates.}
  \label{fig:frameworks_overview}
  \vspace{-2mm}
\end{figure*}

\section{Methodology}
\label{sec:methodology}

Figure~\ref{fig:frameworks_overview} presents \methodName{}, an interpretable CA framework that materializes CA as a global-to-local symbolic decision process. Module~1 transforms an unstructured label vocabulary into a hypernym-inspired semantic skeleton (Section~\ref{sec:module1}). Guided by this global structure, Module~2 defines each internal node as a local predictive substrate and progressively refines its operators (Section~\ref{sec:module2}). The overall workflow is summarized in Algorithms~\ref{app:algorithms}.

\subsection{Global Skeleton Induction}
\label{sec:module1}

\noindent \textbf{Motivation.} 
LLMs provide rich prior semantic knowledge and compositional reasoning capabilities, enabling them to leverage world knowledge to interpret task-specific semantics and organize them into structured symbolic representations, such as trees and graphs~\cite{wang2025ensembling,knauer2025oh,chen2024sac,mo2026kggen,xu2026self,tan2026privgemo, jiang2026advancing}. Leveraging these capabilities, we employ an LLM to induce a tree-structured skeleton whose hierarchical organization naturally captures hypernym-inspired relations among labels with varying semantic granularities, thereby materializing implicit label semantics as an explicit semantic hierarchy.
To mitigate structural uncertainty and generation variability, we adopt a three-stage induction process to obtain a reliable skeleton: (1) \emph{Global Candidate Skeleton Induction}, which uses prompt variants to propose alternative hypernym-inspired skeletons; (2) \emph{Semantic-Structural Skeleton Weighting}, which evaluates the semantic coherence and structural quality of each candidate; and (3) \emph{MBR-Based Candidate Skeleton Selection}, which selects a high-quality candidate that is most consistent with the remaining candidates.

\subsubsection{Global Candidate Skeleton Induction}
\label{sec:m1:candidates}
To obtain diverse candidate skeletons for subsequent evaluation and consensus, we prompt the LLM multiple times under mild, semantics-preserving perturbations of the same induction request. Each inference integrates the task description, label texts, and structural instructions to infer latent semantic relations within the label space and organize them into a tree-structured skeleton (detailed in Appendix~\ref{app:prompts}). Each prompt contains five components: (1) a \emph{task specification} describing the CA task and the intended role of the skeleton; (2) a \emph{label vocabulary} containing the complete target label set $\mathcal{L}$; (3) a \emph{semantic induction objective} together with task-informed grouping guidelines, requiring the LLM to identify hypernym-inspired relations, group labels purely by the meanings of their names, and assign every internal node a meaningful concept name; (4) a set of \emph{structural constraints}, including a single complete tree, exact label coverage, unique leaf placement, bounded depth and branching, and no single-child internal nodes; and (5) an \emph{output specification} requiring a JSON structured tree with an example.

To induce multiple alternatives without changing the underlying semantic objective, we construct prompt variants that preserve the same labels, induction objective, and structural constraints while perturbing only their surface realization. Specifically, the variants permute the label ordering, reorder the requirement statements, paraphrase each requirement without changing its meaning, and replace the worked example. We invoke the LLM independently with each variant and treat every valid output as a candidate skeleton.

\subsubsection{Semantic-Structural Skeleton Weighting}
\label{sec:m1:scoring}

The generated candidates are structurally valid, but they may differ in how they capture label semantics and how compactly they organize the label space. We therefore assign each candidate a semantic-structural weight that reflects both local semantic coherence and global structural complexity. Specifically, we first compute a cost that penalizes semantically related labels placed in broad subtrees and skeletons with inappropriate granularity, and then convert this cost into a normalized quality weight. A higher score therefore indicates a candidate that organizes related labels into compact local subspaces without making the overall skeleton unnecessarily complex.

To quantify the semantic coherence of a candidate skeleton, we use a pairwise label-affinity cost derived from Dasgupta's similarity-based hierarchical clustering objective~\cite{dasgupta2016cost}. Specifically, we embed each label name $\ell_i$ using the pretrained \texttt{all-MiniLM-L6-v2} sentence encoder~\cite{wang2020minilm,reimers2019sentence}, which provides efficient and semantically informative representations for short texts. Let $\mathbf{e}_i$ denote the resulting L2-normalized embedding. We then define the semantic affinity between two labels $\ell_i,\ell_j\in\mathcal{L}$ as the clipped cosine similarity of their embeddings
$\sigma_{ij}
=
\max\left(0,\mathbf{e}_i^\top\mathbf{e}_j\right)$
with $\sigma_{ii}=1$. This yields a non-negative affinity score in $[0,1]$, where negatively correlated embeddings are treated as semantically unrelated.
For a candidate skeleton $\mathcal{T}$, let $\mathrm{LCA}_{\mathcal{T}}(\ell_i,\ell_j)$ be the lowest common ancestor of $\ell_i$ and $\ell_j$. We define the semantic cost as
\[
    \mathrm{Cost}_{\mathrm{sem}}(\mathcal{T})
    =
    \sum_{i<j}
    \sigma_{ij}
    \cdot
    \left|
    \mathrm{Leaves}
    \left(
    \mathrm{LCA}_{\mathcal{T}}(\ell_i,\ell_j)
    \right)
    \right|.
\]
This cost penalizes cases where semantically similar labels are only connected through a high-level ancestor with many descendant labels. Thus, lower semantic cost favors skeletons that place related labels under more specific shared semantic nodes.

Semantic coherence alone does not control the granularity of the skeleton. A tree with too few internal nodes yields a weakly decomposed hierarchy, whereas one with too many internal nodes may fragment the data into undersized local classification problems. We therefore introduce a granularity regularizer based on the number of internal nodes:
\[
    \mathrm{Cost}_{\mathrm{str}}(\mathcal{T})
    =
    \lambda \cdot \overline{\mathrm{Cost}}_{\mathrm{sem}} \cdot
    \frac{\left||\mathcal{V}_{\mathrm{int}}(\mathcal{T})| - \kappa\right|}{\kappa},
\]
where $\mathcal{V}_{\mathrm{int}}(\mathcal{T})$ is the set of internal nodes of $\mathcal{T}$, $\kappa$ is a target budget proportional to $|\mathcal{L}|$, $\overline{\mathrm{Cost}}_{\mathrm{sem}}$ is the mean semantic cost over all candidates used to rescale the structural penalty, and $\lambda>0$ is a fixed weight controlling the overall strength of this regularizer. The total cost is
\[
    \mathrm{Cost}(\mathcal{T})
    =
    \mathrm{Cost}_{\mathrm{sem}}(\mathcal{T})
    +
    \mathrm{Cost}_{\mathrm{str}}(\mathcal{T}).
\]

Finally, to obtain a positive and concentration-controllable distribution that monotonically favors lower-cost candidates, we transform the candidate costs into normalized exponential weights~\cite{dalalyan2008aggregation}:
\begin{equation}
\label{eq:reliability_weights}
    w_k
    =
    \frac{
        \exp(-\eta \, \mathrm{Cost}(\mathcal{T}^{(k)}))
    }{
        \sum_{j=1}^{K}
        \exp(-\eta \, \mathrm{Cost}(\mathcal{T}^{(j)}))
    },
\end{equation}
where $\eta>0$ controls the concentration of the weighting distribution. Candidates with lower costs receive larger weights, which summarize their relative semantic-structural quality.

\subsubsection{MBR-Based Candidate Skeleton Selection}
\label{sec:m1:mbr} 
Section~\ref{sec:m1:scoring} computes semantic-structural weights that assess the quality of individual candidate skeletons but do not capture their agreement with other candidates. We therefore use these weights within a distribution-level consensus criterion that selects a skeleton that is both individually coherent and structurally representative of other high-quality candidates. Specifically, we treat the semantic-structural weights as an empirical distribution over the candidates, measure pairwise structural disagreement using a Jaccard-normalized Robinson--Foulds distance~\cite{robinson1981comparison}, and apply minimum Bayes risk (MBR) decoding~\cite{kumar2004minimum} to select the candidate with the minimum expected structural disagreement under this distribution.

Let $\mathbb{T}=\{\mathcal{T}^{(1)},\ldots,\mathcal{T}^{(K)}\}$ denote the candidate set. The normalized weights from Eq.~\eqref{eq:reliability_weights} define an empirical distribution $\pi_{\eta}$ over the candidates, where $\pi_{\eta}(\widetilde{\mathcal{T}}=\mathcal{T}^{(j)})=w_j$. Under this distribution, candidates with higher semantic-structural quality contribute more strongly to the consensus. 
Each internal node of a skeleton groups the labels beneath it into one semantic cluster; we call that label set a \emph{clade}. Formally, the clade induced by an internal node is the set of all leaf labels in the subtree rooted at that node, and $\mathrm{Clades}(\mathcal{T})$ collects the clades of every non-root internal node of $\mathcal{T}$.
Since a rooted tree is determined by its clade set, comparing two skeletons reduces to comparing $\mathrm{Clades}(\mathcal{T})$ and $\mathrm{Clades}(\mathcal{T}')$.
For two candidates $\mathcal{T}$ and $\mathcal{T}'$, we adopt a Jaccard-normalized variant of the Robinson--Foulds distance~\cite{robinson1981comparison}:
\[
d_{\mathrm{tree}}(\mathcal{T},\mathcal{T}')
=
\frac{
\left|\mathrm{Clades}(\mathcal{T})\triangle\mathrm{Clades}(\mathcal{T}')\right|
}{
\left|\mathrm{Clades}(\mathcal{T})\cup\mathrm{Clades}(\mathcal{T}')\right|
},
\]
where $\triangle$ denotes the symmetric difference. The numerator counts the clades appearing in only one of the two skeletons, while the denominator normalizes this disagreement by the total number of distinct clades appearing in either skeleton. Thus, $d_{\mathrm{tree}}\in[0,1]$, with smaller values indicating stronger structural agreement.
We then select the candidate that minimizes its expected structural disagreement with the quality-weighted candidate distribution:
\begin{equation}
\label{eq:mbr_medoid}
\mathcal{T}^{\star}
=
\arg\min_{\mathcal{T}^{(k)}\in\mathbb{T}}
\sum_{j=1}^{K}
w_j\,
d_{\mathrm{tree}}
\left(
\mathcal{T}^{(k)},
\mathcal{T}^{(j)}
\right).
\end{equation}
Equivalently, $\mathcal{T}^{\star}$ is the quality-weighted medoid of the candidate set, balancing its individual semantic-structural quality against its structural agreement with other high-quality candidates. This criterion is designed to base the final decision on the collective evidence of the candidate pool, thereby reducing sensitivity to an individually favorable but structurally idiosyncratic generation. To formalize this property, define
$\mathrm{Risk}_{\eta}(\mathcal{T})
=
\mathbb{E}_{\widetilde{\mathcal{T}}\sim\pi_{\eta}}
\left[
d_{\mathrm{tree}}(\mathcal{T},\widetilde{\mathcal{T}})
\right]$
as the objective in Eq.~\eqref{eq:mbr_medoid}, and let
$D_{\mathbb{T}}
=
\max_{\mathcal{T},\mathcal{T}'\in\mathbb{T}}
d_{\mathrm{tree}}(\mathcal{T},\mathcal{T}')
\leq 1$
denote the diameter of the candidate set. The following lemma characterizes the resulting consensus guarantee.

\begin{lemma}[Reliability of consensus selection]
\label{prop:mbr_medoid}
Suppose that some $\bar{\mathcal{T}}\in\mathbb{T}$ has weighted
$\delta$-neighborhood mass of at least $\rho$, i.e.,
$
\pi_{\eta}
\left(
d_{\mathrm{tree}}
\left(
\bar{\mathcal{T}},
\widetilde{\mathcal{T}}
\right)
\leq \delta
\right)
\geq \rho,
$
for some $0\leq\delta\leq D_{\mathbb{T}}$ and $\rho\in(0,1]$. Then:
\begin{enumerate}
    \item
    $\mathrm{Risk}_{\eta}(\mathcal{T}^{\star})
    \leq
    \rho\delta+(1-\rho)D_{\mathbb{T}}$;
    \item
    no candidate $\mathcal{T}'$ satisfying
    $
    d_{\mathrm{tree}}
    \left(
        \mathcal{T}',
        \bar{\mathcal{T}}
    \right)
    >
    2\delta+\frac{1-\rho}{\rho}D_{\mathbb{T}}
    $
    can be selected by Eq.~\eqref{eq:mbr_medoid}.
\end{enumerate}
\end{lemma}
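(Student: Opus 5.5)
The plan is to compare every candidate against the reference skeleton $\bar{\mathcal{T}}$. This works because $\mathcal{T}^{\star}$ minimizes $\mathrm{Risk}_{\eta}$ over $\mathbb{T}$ and $\bar{\mathcal{T}}\in\mathbb{T}$, so $\mathrm{Risk}_{\eta}(\mathcal{T}^{\star})\le\mathrm{Risk}_{\eta}(\bar{\mathcal{T}})$. Write $N=\{j: d_{\mathrm{tree}}(\bar{\mathcal{T}},\mathcal{T}^{(j)})\le\delta\}$ for the $\delta$-neighborhood and $p=\sum_{j\in N}w_j\ge\rho$ for its mass.

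For part (1), I split $\mathrm{Risk}_{\eta}(\bar{\mathcal{T}})$ into the terms indexed by $N$ and its complement. Each term in $N$ contributes at most $w_j\delta$. Each term outside $N$ contributes at most $w_jD_{\mathbb{T}}$, since both skeletons lie in $\mathbb{T}$. This gives $\mathrm{Risk}_{\eta}(\bar{\mathcal{T}})\le p\delta+(1-p)D_{\mathbb{T}}$. Because $\delta\le D_{\mathbb{T}}$, the right-hand side is nonincreasing in $p$, so replacing $p$ by $\rho\le p$ yields $\rho\delta+(1-\rho)D_{\mathbb{T}}$. Combined with the minimality of $\mathcal{T}^{\star}$, this proves (1).

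For part (2), I will use that $d_{\mathrm{tree}}$ is a metric on skeletons. It is exactly the Jaccard distance between the clade sets $\mathrm{Clades}(\cdot)$, with the convention $d=0$ when both clade sets are empty. Since a rooted tree is determined by its clade set, distinct skeletons have distinct clade sets, so the distance separates points.

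Take any $\mathcal{T}'$ with $\Delta:=d_{\mathrm{tree}}(\mathcal{T}',\bar{\mathcal{T}})>2\delta+\frac{1-\rho}{\rho}D_{\mathbb{T}}$; in particular $\Delta>\delta$. Drop the nonnegative terms outside $N$. For each $j\in N$, the triangle inequality gives $d_{\mathrm{tree}}(\mathcal{T}',\mathcal{T}^{(j)})\ge\Delta-d_{\mathrm{tree}}(\bar{\mathcal{T}},\mathcal{T}^{(j)})\ge\Delta-\delta>0$. Hence
\[
\mathrm{Risk}_{\eta}(\mathcal{T}')\ \ge\ p(\Delta-\delta)\ \ge\ \rho(\Delta-\delta)\ >\ \rho\delta+(1-\rho)D_{\mathbb{T}}\ \ge\ \mathrm{Risk}_{\eta}(\bar{\mathcal{T}})\ \ge\ \mathrm{Risk}_{\eta}(\mathcal{T}^{\star}).
\]
The second inequality uses $\Delta-\delta>0$. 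The strict inequality is the assumed lower bound on $\Delta$ multiplied by $\rho>0$. The last two inequalities come from part (1). Since the inequality is strict, $\mathcal{T}'$ cannot attain the minimum in Eq.~\eqref{eq:mbr_medoid}, whatever tie-breaking rule is used.

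The main obstacle is justifying the triangle inequality for the Jaccard-normalized Robinson--Foulds distance. Unlike the unnormalized symmetric difference, the normalized version is not obviously a metric. My first approach is to cite the classical fact that the Jaccard (Steinhaus / Marczewski--Steinhaus) distance $|A\triangle B|/|A\cup B|$ is a metric on finite sets. If a self-contained argument is preferred, I would reduce to the case $C\subseteq A\cup B$ by a monotonicity argument and then verify the inequality by counting the elements of the Venn regions of $A,B,C$. The degenerate case where some clade sets are empty is handled by the convention above.
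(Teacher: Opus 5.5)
Your proof is correct and follows the paper's argument essentially step for step. For part (i), you split $\mathrm{Risk}_{\eta}(\bar{\mathcal{T}})$ over the $\delta$-neighborhood and its complement, then invoke the minimality of $\mathcal{T}^{\star}$; for part (ii), you lower-bound $\mathrm{Risk}_{\eta}(\mathcal{T}')$ by $\rho(\Delta-\delta)$ via the triangle inequality, and you additionally fill two small gaps the paper leaves implicit: that the Jaccard-normalized Robinson--Foulds distance actually satisfies the triangle inequality, and that $\Delta-\delta>0$ before $p$ is replaced by $\rho$.
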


Part~(i) shows that concentration of the quality-weighted candidates around a shared structure bounds the expected structural disagreement of the selected skeleton. Part~(ii) gives a complementary exclusion guarantee: a candidate sufficiently far from this concentrated mass cannot minimize the MBR objective, even with a favorable individual score. Together, these results limit the influence of structurally idiosyncratic generations on consensus selection (proof in Appendix~\ref{app:mbr_medoid}).

\subsection{Local Substrate Evolution}
\label{sec:module2}

\noindent \textbf{Motivation.} Module~1 provides a symbolic skeleton of the label space, but the skeleton alone does not specify how each internal node should make a semantic decision among its child nodes. A straightforward solution is to equip each internal node with a classifier and perform hierarchical inference along the skeleton. However, this design is insufficiently adaptive because different nodes require heterogeneous semantic evidence that cannot be adequately captured by a uniform, predefined operator space. We therefore materialize each internal node as an evolvable predictive substrate that is progressively optimized over multiple evolution rounds by modifying its node-specific operators. Each evolution round comprises three stages: (1) \emph{Substrate Training}, which trains the current classifiers and identifies evidence of semantic distinctions insufficiently captured by their operators; (2) \emph{Substrate Modification}, which leverages an LLM to propose substrate operator modifications based on the evidence; and (3) \emph{Substrate Selection}, which leverages an exploration--exploitation strategy to efficiently prioritize the most promising substrates for modification evaluation.

\subsubsection{Substrate Training}
\label{sec:m2:training}
Each internal node defines a local semantic classification problem over its child nodes and therefore requires a predictive mechanism tailored to the corresponding semantic context. Substrate training operationalizes the current substrate state by training a node-specific classifier on the evidence produced by its operator set, while profiling prediction deficiencies that guide subsequent operator modification.
At evolution round $r$, the substrate associated with each internal node $v$ is represented as
\[
\mathcal{S}_v^{(r)}
=
\left(
\mathcal{O}_v^{(r)},
f_v^{(r)},
\mathcal{H}_v^{(<r)}
\right),
\]
where $\mathcal{O}_v^{(r)}$ is its node-specific symbolic operator set, $f_v^{(r)}$ is a random-forest classifier~\cite{breiman2001random} that makes local decisions among the child nodes of $v$, and $\mathcal{H}_v^{(<r)}$ stores previously proposed operator modifications and their validated outcomes.
An operator is a deterministic function that maps a CTA target column or a CPA ordered target-column pair, together with its table context, to a scalar evidence value consumed by the local classifier. The initial operator library contains four broad categories: (i) \emph{profile and distributional operators}, which summarize value composition, length, cardinality, frequency, and entropy; (ii) \emph{structural pattern operators}, which capture recurring formats, value shapes, and domain-related patterns; (iii) \emph{contextual operators}, which characterize column position, table structure, and neighboring columns; and (iv) \emph{semantic similarity operators}, which measure the similarity between table evidence and the semantic concepts represented by the child nodes.

At the initial round $r=0$, every substrate adopts the default operator library, detailed in Appendix~\ref{app:operators_detail}. 
For an internal node $v$, the random forest $f_v^{(0)}$~\cite{breiman2001random} is trained on instances whose gold labels lie in the subtree rooted at $v$, with the immediate child of $v$ on the gold path used as the local routing target. In subsequent rounds, only substrates whose operator sets have been modified need to be reconstructed and retrained.
For an input instance $x$ and a candidate leaf label $y$ with root-to-leaf path
\[
\pi(y)=(v_0,v_1,\ldots,v_L=y),
\]
we compute its raw path score as
\[
s_y^{(r)}(x)
=
\prod_{\ell=1}^{L}
P\!\left(
v_\ell
\mid
v_{\ell-1},x;
f_{v_{\ell-1}}^{(r)}
\right),
\]
where each factor is the probability assigned by the substrate at parent node $v_{\ell-1}$ to child node $v_\ell$. Because paths may differ in depth and their local classifiers may produce differently scaled probabilities, we further apply a per-label isotonic calibration function $g_y$~\cite{niculescu2005predicting}  to produce label-ranking scores:
\[
\widetilde{s}_y^{(r)}(x)
=
g_y\!\left(s_y^{(r)}(x)\right),
\qquad
\widehat{y}
=
\arg\max_{y\in\mathcal{L}}
\widetilde{s}_y^{(r)}(x).
\]
Overall, each prediction remains traceable through its root-to-leaf semantic path, the local classifiers along that path, and the explicit operators used by those classifiers.
After training, each substrate is evaluated on its corresponding validation instances. We summarize its local decision accuracy, dominant confusions among child nodes, affected descendant labels, and representative misclassified instances into a substrate evidence profile $\mathcal{B}_v^{(r)}$ for generating substrate modifications.

\subsubsection{Substrate Modification}
\label{sec:m2:modification}
To address the unresolved semantic distinctions and evidence deficiencies revealed by the substrate evidence profile, substrate modification leverages the semantic reasoning and knowledge synthesis capabilities~\cite{wang2026llm,zhou2026multi} of an LLM to propose modifications to the current operator set of the substrate.
For each substrate $v$, the LLM receives three inputs: (i) the evidence profile $\mathcal{B}_v^{(r)}$; (ii) the current operator set $\mathcal{O}_v^{(r)}$; and (iii) the evolution history $\mathcal{H}_v^{(<r)}$, which records previously attempted modifications and their empirical outcomes.
Based on these inputs, the LLM contrasts the confused child groups, examines which evidence is already captured by the current operators, and hypothesizes generalizable signals that may better distinguish the corresponding semantic classes. Each proposed modification takes one of three forms: \emph{insert}, which introduces a missing operator; \emph{update}, which revises an existing operator; or \emph{remove}, which deletes a redundant or misleading operator. The resulting proposals form a substrate-specific modification queue:
\[
\mathcal{Q}_v^{(r)}
=
\operatorname{GenerateModifications}
\left(
\mathcal{B}_v^{(r)},
\mathcal{O}_v^{(r)},
\mathcal{H}_v^{(<r)}
\right).
\]
Each element $q\in\mathcal{Q}_v^{(r)}$ specifies one executable insert, update, or remove operation on the current operator set. A condensed prompt template is provided in Appendix~\ref{app:prompts}.

\subsubsection{Substrate Selection}
\label{sec:m2:selection}

LLM-generated modifications form a heterogeneous search space whose utility remains uncertain until retraining and empirical validation. To make effective use of a limited retraining budget and prioritize the most promising modifications, we perform exploration--exploitation-guided substrate selection. Specifically, the selection strategy balances exploring underexamined substrate-specific modification queues that may uncover previously overlooked evidence with exploiting queues whose prior modifications have yielded promising validation gains, avoid uniformly allocating resources to low-value proposals or prematurely concentrating the budget on a small subset of substrates. 

We therefore formulate substrate selection as a budgeted multi-armed bandit (MAB) problem~\cite{auer2002finite}, where each active substrate queue is treated as an arm and each selection evaluates its next modification using the substrate training and validation procedure described in Section~\ref{sec:m2:training}. Specifically, selecting arm $v$ nominates the next modification in $\mathcal{Q}_v^{(r)}$ for screening, and a gate-passing modification consumes one local training trial. Let $N_v$ denote the number of completed trials for substrate $v$, $\overline{R}_v$ its empirical mean validation reward, and $\tau$ the total number of completed trials. Active queues that have not yet received an evaluation opportunity are prioritized first. The remaining queues are selected using the following UCB-style score:
\[
v^{\star}
=
\arg\max_{v:\mathcal{Q}_v^{(r)}\neq\emptyset}
\left(
\overline{R}_v
+
\beta
\sqrt{
\frac{\log(\tau+1)}{N_v}
}
\right),
\]
where $\beta>0$ controls the exploration strength. The first term exploits substrates whose previous modifications have produced larger validation gains, whereas the second term explores substrates that have received fewer evaluation opportunities.

Once a substrate is selected, its next modification is screened for executability, relevance to the targeted distinction, and non-redundancy with the current operator set~\cite{peng2005feature}. Modifications rejected by these lightweight gates are discarded without consuming the local training budget. 
For a gate-passing modification $q$, let $\Delta a_{v,q}$ denote the change in local classification accuracy between the modified substrate and its current state. We assign reward $r_{v,q}=1+\alpha\Delta a_{v,q}$ when $\Delta a_{v,q}>0$ and $r_{v,q}=0$ otherwise, where $\alpha>0$ scales the accuracy-gain bonus. Thus, an effective modification receives a unit success reward plus its gain bonus, whereas neutral and risky modifications receive zero reward. Because local classification accuracy lies in $[0,1]$, the reward is bounded by $R_{\max}=1+\alpha$. Gate-passing outcomes update $\overline{R}_v$ and the corresponding evolution history.
The following lemma characterizes how the exploration term prevents the selection strategy from repeatedly concentrating on substrates with high historical rewards. Let $b_\tau(v)
=
\beta
\sqrt{
\frac{\log(\tau+1)}{N_v}
}$
denote the exploration bonus.
\begin{lemma}[Exploration dominance]
\label{lem:exploration_dominance}
Consider a selection after every active substrate queue has a positive recorded trial count. For any two active queues $v$ and $u$, if
$b_\tau(v)-b_\tau(u)>R_{\max}$, 
then
$\overline{R}_v+b_\tau(v)
>
\overline{R}_u+b_\tau(u)$.
Consequently, if this condition holds for every other active queue $u\neq v$, the selection strategy selects substrate $v$.
\end{lemma}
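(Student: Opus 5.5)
The plan is to show that the empirical means $\overline{R}_u$ are confined to an interval of length at most $R_{\max}$, so a bonus gap exceeding $R_{\max}$ cannot be overturned by any difference in means. We then argue the selection consequence directly from the form of the UCB rule.

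\textbf{Step 1 (range of the means).} Every recorded reward satisfies $0\le r_{u,q}\le R_{\max}$. Indeed, $r_{u,q}=0$ when $\Delta a_{u,q}\le 0$. Otherwise $r_{u,q}=1+\alpha\Delta a_{u,q}$, and since local accuracies lie in $[0,1]$ we have $\Delta a_{u,q}\le 1$, so $r_{u,q}\le 1+\alpha=R_{\max}$.

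By assumption every active queue has a positive recorded trial count, so $N_u\ge 1$. Hence $\overline{R}_u$ is a genuine average of values in $[0,R_{\max}]$, and $\overline{R}_u\in[0,R_{\max}]$. Likewise the bonus $b_\tau(u)$ is finite and well defined. This is the only point where the hypothesis on trial counts is used. It also rules out the prioritization branch for unevaluated queues, so the UCB argmax is what actually governs the selection.

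\textbf{Step 2 (the inequality).} For any two active queues $v,u$, Step 1 gives $\overline{R}_v-\overline{R}_u\ge 0-R_{\max}=-R_{\max}$. Therefore
\[
\bigl(\overline{R}_v+b_\tau(v)\bigr)-\bigl(\overline{R}_u+b_\tau(u)\bigr)\ \ge\ b_\tau(v)-b_\tau(u)-R_{\max}\ >\ 0,
\]
where the last inequality is exactly the hypothesis $b_\tau(v)-b_\tau(u)>R_{\max}$.

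\textbf{Step 3 (selection).} Suppose the condition holds for every active $u\neq v$. Then $v$'s UCB score strictly exceeds that of every other queue with $\mathcal{Q}_u^{(r)}\neq\emptyset$. Hence $v$ is the unique maximizer of the UCB score, and no tie-breaking convention can change the outcome.

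There is no real obstacle here. The only care needed is in Step 1: we must justify $N_u>0$ so the means are defined, and we must confirm that only gate-passing outcomes, whose rewards lie in $[0,R_{\max}]$, enter $\overline{R}_u$.
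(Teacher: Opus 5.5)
Your proposal is correct and follows essentially the same route as the paper's proof: bound every reward in $[0,R_{\max}]$ so that $\overline{R}_v-\overline{R}_u\geq -R_{\max}$, then conclude that a bonus gap exceeding $R_{\max}$ makes the score difference strictly positive. Your explicit checks, namely that $N_u\geq 1$ makes the means well defined and that strict dominance gives a unique maximizer, are details the paper leaves implicit.
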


Lemma~\ref{lem:exploration_dominance} shows that a sufficiently underexplored substrate can outweigh any possible empirical-reward advantage of a more frequently evaluated substrate. 
The exploration term therefore helps prevent the limited evaluation budget from being persistently concentrated on only a small subset of modification queues. 
\section{Experimental Evaluation}
\subsection{Experimental Setup}

\begin{table}[t]
\centering
\caption{Dataset statistics. Prediction instances are target columns for CTA and ordered target-column pairs for CPA.}
\label{tab:dataset_stats}
\setlength{\tabcolsep}{4pt}
\renewcommand{\arraystretch}{1.05}
\resizebox{\columnwidth}{!}{
\begin{tabular}{llcccccc}
\toprule
\textbf{Alias} & \textbf{Dataset} & \textbf{Label} & \textbf{Task} & \textbf{Labels}
& \textbf{Train} & \textbf{Valid} & \textbf{Test} \\
\midrule
GTS-CTA       & GitTables & Schema.org & CTA & 59  & 499   & 93    & 98 \\
GTD-CTA       & GitTables & DBpedia    & CTA & 120 & 1{,}814 & 308 & 305 \\
ST-CTA     & SOTAB V2      & Schema.org           & CTA & 82  & 6{,}192 & 1{,}769 & 1{,}851 \\
T2D-CPA       & T2D V2        & DBpedia           & CPA  & 118 & 309   & 62    & 63 \\
ST-CPA  & SOTAB V2    & Schema.org         & CPA  & 108 & 8{,}603 & 2{,}458 & 2{,}340 \\
\bottomrule
\end{tabular}
}
\end{table}

\begin{table*}[t]
\centering
\caption{Overall performance by Micro-F1 and Macro-F1 (\%). Best results are in \textbf{bold}; second best are \underline{underlined}.}
\label{tab:overall_performance}
\setlength{\tabcolsep}{3.5pt}
\renewcommand{\arraystretch}{1.05}
\begin{tabular}{lcccccccccccc}
\toprule
& \multicolumn{2}{c}{GTD-CTA}
& \multicolumn{2}{c}{GTS-CTA}
& \multicolumn{2}{c}{ST-CTA}
& \multicolumn{2}{c}{T2D-CPA}
& \multicolumn{2}{c}{ST-CPA}
& \multicolumn{2}{c}{Average} \\
\cmidrule(lr){2-3} \cmidrule(lr){4-5} \cmidrule(lr){6-7} \cmidrule(lr){8-9} \cmidrule(lr){10-11} \cmidrule(lr){12-13}
Method & Micro & Macro & Micro & Macro & Micro & Macro & Micro & Macro & Micro & Macro & Micro & Macro \\
\midrule
TabEmb & 79.93 & 47.89 & 77.78 & 38.14 & 68.40 & 54.44 & 26.98 & 11.60 & 61.54 & 46.83 & 62.93 & 39.78 \\
Watchog & \underline{92.79} & \underline{63.59} & 38.78 & 28.85 & 76.99 & 71.44 & 38.10 & 18.43 & 70.90 & 59.46 & 63.51 & 48.35 \\
Doduo & 87.87 & 63.08 & \underline{90.82} & \underline{73.38} & 72.07 & 67.98 & \underline{82.54} & 64.10 & 75.98 & 71.10 & 81.86 & \underline{67.93} \\
REVEAL+ & 81.82 & 55.49 & 87.04 & 56.29 & 80.98 & 77.71 & \underline{82.54} & 63.68 & \underline{82.95} & \underline{77.76} & \underline{83.07} & 66.19 \\
\midrule
Deepseek (0-shot) & 36.82 & 33.96 & 63.27 & 46.41 & 79.54 & 77.27 & 77.78 & \textbf{75.36} & 71.75 & 64.94 & 65.83 & 59.59 \\
Deepseek (5-shot) & 59.30 & 49.76 & 68.37 & 52.17 & \underline{81.78} & \textbf{79.56} & 76.19 & 71.39 & 81.03 & 74.70 & 73.33 & 65.52 \\

\midrule
\methodName{} & \textbf{95.41} & \textbf{77.80} & \textbf{95.92} & \textbf{85.46} & \textbf{82.39} & \underline{79.51} & \textbf{90.48} & \underline{73.97} & \textbf{83.25} & \textbf{78.04} & \textbf{89.49} & \textbf{78.96} \\

\bottomrule
\end{tabular}
\vspace{-2mm}
\end{table*}

\begin{table*}[t]
\centering
\caption{Ablation studies for semantic structure and self-evolution by Micro-F1 and Macro-F1 (\%).}
\label{tab:ablations}
\setlength{\tabcolsep}{3.5pt}
\renewcommand{\arraystretch}{1.05}
\begin{tabular}{lcccccccccccc}
\toprule
& \multicolumn{2}{c}{GTD-CTA}
& \multicolumn{2}{c}{GTS-CTA}
& \multicolumn{2}{c}{ST-CTA}
& \multicolumn{2}{c}{T2D-CPA}
& \multicolumn{2}{c}{ST-CPA}
& \multicolumn{2}{c}{Average} \\
\cmidrule(lr){2-3} \cmidrule(lr){4-5} \cmidrule(lr){6-7} \cmidrule(lr){8-9} \cmidrule(lr){10-11} \cmidrule(lr){12-13}
Method & Micro & Macro & Micro & Macro & Micro & Macro & Micro & Macro & Micro & Macro & Micro & Macro \\
\midrule

\methodName{} & \textbf{95.41} & \textbf{77.80} & \textbf{95.92} & \textbf{85.46} & \textbf{82.39} & \textbf{79.51} & \textbf{90.48} & \textbf{73.97} & \textbf{83.25} & 78.04 & \textbf{89.49} & \textbf{78.96} \\
\methodName{} (w/o. skeleton) & 93.44 & 76.84 & 93.88 & 76.64 & 81.42 & 78.90 & 80.95 & 61.28 & 82.52 & \textbf{78.77} & 86.44 & 74.49 \\
\methodName{} (w/o. evolution) & 91.48 & 73.01 & 92.86 & 75.61 & 76.99 & 71.86 & \textbf{90.48} & \textbf{73.97} & 77.95 & 70.78 & 85.95 & 73.05 \\

\bottomrule
\end{tabular}
\vspace{-2mm}
\end{table*}

\noindent \textbf{Datasets.}
We evaluate \methodName{} on five public benchmark datasets covering both CTA and CPA tasks. Their names and statistics are summarized in Table~\ref{tab:dataset_stats}. \textit{GTD-CTA} and \textit{GTS-CTA} are two CTA benchmarks released as part of SemTab~\cite{abdelmageed2022results} and constructed from subsets of GitTables~\cite{hulsebos2023gittables}. Their columns are annotated using DBpedia and Schema.org type vocabularies, respectively.  From SOTAB V2~\cite{korini2022sotab,sotabv2}, we derive \textit{ST-CTA} and \textit{ST-CPA}, which provide Schema.org type annotations for individual columns and Schema.org property annotations for column pairs, respectively. For both benchmarks, we subsample the original training sets to a smaller scale while preserving full label coverage, and keep the official validation and test splits unchanged. We also construct \textit{T2D-CPA} for the CPA task from the attribute-to-property annotations of the T2D V2 gold standard~\cite{ritze2017matching}. Its tables originate from WebTables~\cite{cafarella2008webtables}, and its relation labels are drawn from DBpedia properties.

\noindent \textbf{Baselines.}
We compare \textit{\methodName{}} against representative CA baselines. \textit{Doduo}~\cite{suhara2022annotating} fine-tunes a pretrained Transformer for joint table representation learning and column annotation, while \textit{Watchog}~\cite{miao2023watchog} uses contrastive pretraining on unlabeled tables to learn transferable representations. We also evaluate direct LLM-based annotation with \textit{DeepSeek-v4-Pro}~\cite{xu2026deepseek} under zero-shot and five-shot prompting, following prior LLM-based CA studies~\cite{korini2023column,korini2024column}. \textit{\reveal{}}~\cite{ding2025retrieve} retrieves and verifies informative context columns before prediction, whereas \textit{\tabemb{}}~\cite{hoseinzade2026tabemb} combines LLM-derived semantic embeddings with graph-based structural modeling.

\noindent \textbf{Metrics.}
Following prior work~\cite{ding2025retrieve, miao2023watchog, suhara2022annotating}, we evaluate all methods using Micro-F1 and Macro-F1. Micro-F1 is computed globally over all predictions and reflects overall annotation accuracy, while Macro-F1 averages the per-class F1 scores and gives equal weight to each label, better reflecting performance under class imbalance.

\noindent \textbf{Implementation details.}
Module~1 generates $K=5$ candidate skeletons using DeepSeek-v4-Pro~\cite{xu2026deepseek}, with semantic similarity measured using \texttt{all-MiniLM-L6-v2}~\cite{wang2020minilm,reimers2019sentence} embeddings. Each internal node is trained as a random forest with 300 trees and may additionally incorporate \tabemb{}~\cite{hoseinzade2026tabemb} features if provided when selected through LLM-guided evolution. Local substrate evolution proceeds for five rounds, considering at most four operator modifications per substrate. A node-level UCB-style scheduler allocates a capped retraining budget of 6 trials, with the relevance and redundancy thresholds set to $\epsilon_v=0.01$ and $\gamma_v=0.92$, respectively. Experiments run on a workstation 
with an Intel(R) Xeon(R) Silver~4314 CPU~@~2.40\,GHz, an NVIDIA RTX~A5000 GPU, and 512\,GB RAM.

\vspace{-2mm}
\subsection{Overall Results. }
\label{sec:exp:overall}

\noindent \textbf{Exp-1: Overall annotation performance.}
Table~\ref{tab:overall_performance} summarizes the results on five CTA and CPA benchmarks. \methodName{} achieves the best Micro-F1 and either the best or second-best Macro-F1 on every dataset, yielding the highest average scores of 89.49\% and 78.96\%, respectively. In average Micro-F1, it surpasses \reveal{}, the strongest baseline, by 6.42\% and exceeds the remaining baselines by 7.63\%--26.56\%. Competing methods exhibit substantial modification across datasets: representation-learning approaches degrade under limited supervision, while direct prompting is sensitive to label semantics, as reflected by its particularly low performance on GTD-CTA. \methodName{} also shows a smaller Micro--Macro gap than Doduo and \reveal{}, indicating more balanced performance across frequent and infrequent labels. Although DeepSeek attains slightly higher Macro-F1 on ST-CTA and T2D-CPA, \methodName{} remains a close second on both and performs more consistently across the full benchmark suite. Overall, these results demonstrate the strong and robust annotation performance of \methodName{} across both tasks and label distributions.
We further evaluate the efficiency of \methodName{} in terms of runtime and LLM token consumption. On the two largest SOTAB benchmarks, \methodName{} is faster than \reveal{}, while using fewer tokens than direct prompting (details in Appendix~\ref{app:time_tokens}).

\noindent \textbf{Exp-2: Ablation studies.}
Table~\ref{tab:ablations} evaluates the contributions of the semantic skeleton and self-evolution in \methodName{}. Removing the hierarchy skeleton (\textit{w/o. skeleton}), which replaces hierarchical decision process with a single evolving classifier over all labels, decreases average Micro-F1 and Macro-F1 by 3.05\% and 4.47\%, respectively, with the largest degradation on T2D-CPA, confirming the value of semantic decomposition for distinguishing related labels. Removing self-evolution (\textit{w/o. evolution}), which uses the initially trained random forests without subsequent operator adaptation, decreases average Micro-F1 and Macro-F1 by 3.54\% and 5.91\%, respectively, demonstrating the benefit of refining node-specific evidence beyond the initial hierarchical predictor. The larger reductions in Macro-F1 further indicate that both components are important for fine-grained and infrequent labels. Although evolution brings no additional gain on T2D-CPA, likely due to its limited validation evidence for accepting local updates, the full model achieves the strongest average performance. Overall, the two modules are complementary: the skeleton organizes the label space into localized decision problems, while self-evolution adapts the evidence used to resolve them.
We also conduct an ablation study to evaluate the MAB-based scheduling strategy by comparing the UCB-style scheduler with random scheduling. The UCB-style scheduler achieves higher validation fitness with fewer retraining trials by prioritizing substrate queues whose previous modifications yield more promising outcomes, whereas random scheduling converges more slowly (detailed results in Appendix~\ref{app:mab_budget_allocation}).

\subsection{Robustness Analysis.}
\label{sec:exp:robustness}

\noindent \textbf{Exp-3: Varying LLMs.}
We evaluate the robustness of \methodName{} across three LLM backbones, Qwen3~\cite{yang2025qwen3}, MiniMax-M3~\cite{lai2026minimax}, and DeepSeek-V4-Pro~\cite{xu2026deepseek}, used for both global skeleton induction and local operator modification. As shown in Table~\ref{tab:llm_variants}, \methodName{} maintains consistently strong performance across all three backbones, with only modest modifications in Micro-F1 and Macro-F1. DeepSeek-V4-Pro performs best on the two ST benchmarks, while MiniMax-M3 achieves the strongest results on GTD-CTA. DeepSeek-V4-Pro nevertheless yields the best overall average performance and is therefore adopted as the default backbone.
\begin{table}[t]
\centering
\caption{Robustness to the LLM backbones. Values are test Micro-/Macro-F1 (\%).}
\label{tab:llm_variants}
\setlength{\tabcolsep}{2.8pt}
\renewcommand{\arraystretch}{1.05}
\resizebox{\columnwidth}{!}{
\begin{tabular}{lcccccc}
\toprule
& \multicolumn{2}{c}{GTD-CTA} & \multicolumn{2}{c}{ST-CTA} & \multicolumn{2}{c}{ST-CPA} \\
\cmidrule(lr){2-3} \cmidrule(lr){4-5} \cmidrule(lr){6-7}
\textbf{LLM backbones} & Micro & Macro
& Micro & Macro
& Micro & Macro \\
\midrule
Qwen3
& 94.75 & 74.41
& 80.93 & 78.27
& 81.11 & 75.12 \\
MiniMax-M3
& \textbf{95.41} & \textbf{79.85}
& 81.15 & 78.17
& 81.71 & 76.06 \\
DeepSeek V4 Pro
& \textbf{95.41} & 77.80
& \textbf{82.39} & \textbf{79.51}
& \textbf{83.25} & \textbf{78.04} \\
\bottomrule
\end{tabular}
}
\end{table}

\begin{figure}[t]
    \centering
    \begin{subfigure}[t]{0.23\textwidth}
        \centering
        \includegraphics[width=\linewidth]{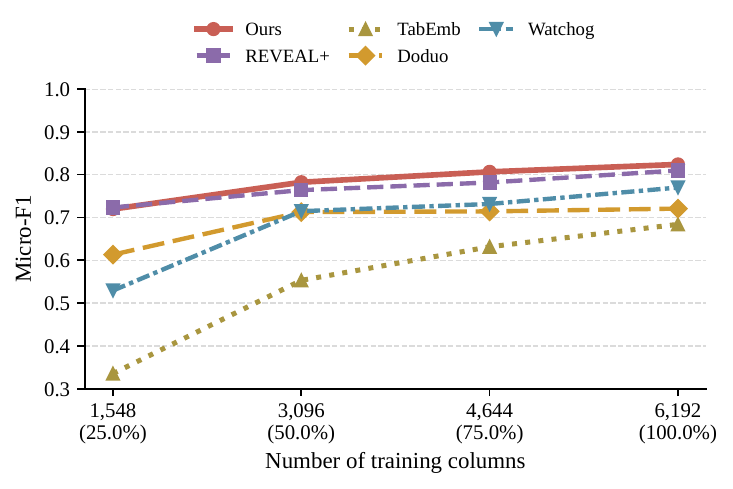}
        \caption{CTA Micro-F1}
        \label{fig:limited_training_cta}
    \end{subfigure}
    \hfill
    \begin{subfigure}[t]{0.23\textwidth}
        \centering
        \includegraphics[width=\linewidth]{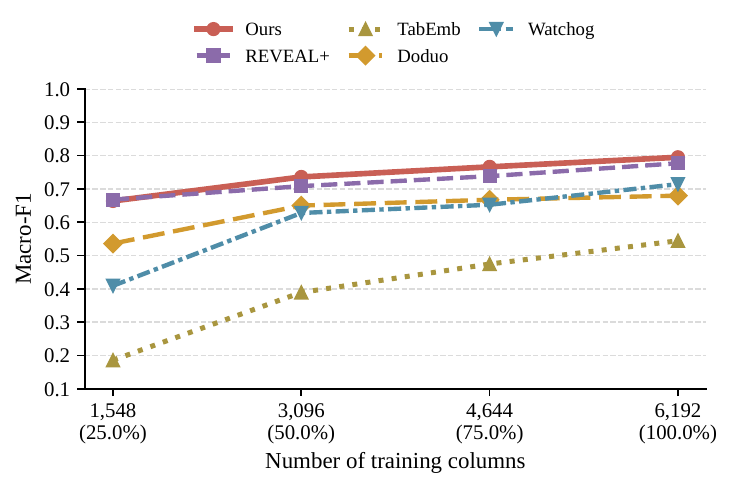}
        \caption{CTA Macro-F1}
        \label{fig:limited_training_cta_macro}
    \end{subfigure}
    
    \begin{subfigure}[t]{0.23\textwidth}
        \centering
        \includegraphics[width=\linewidth]{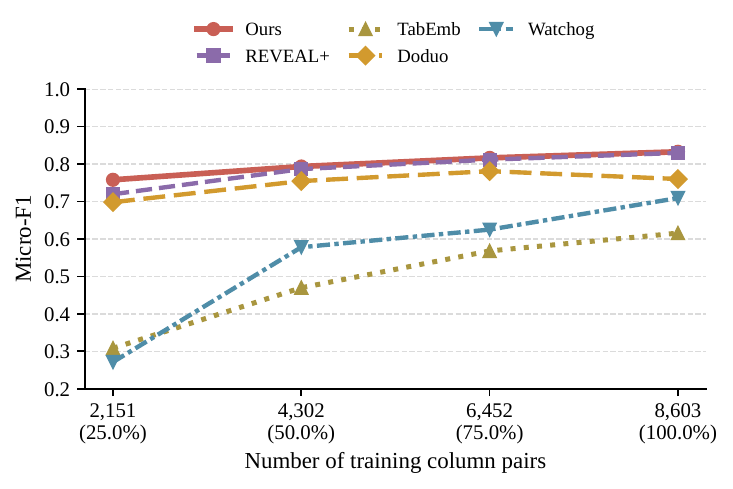}
        \caption{CPA Micro-F1}
        \label{fig:limited_training_re}
    \end{subfigure}
    \hfill
    \begin{subfigure}[t]{0.23\textwidth}
        \centering
        \includegraphics[width=\linewidth]{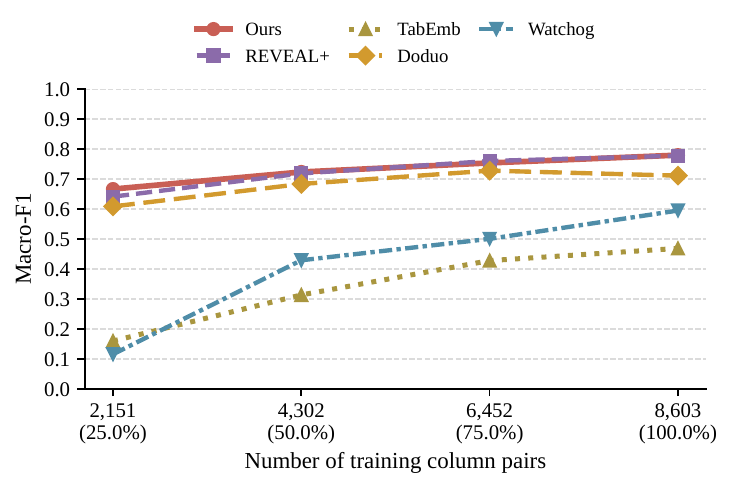}
        \caption{CPA Macro-F1}
        \label{fig:limited_training_re_macro}
    \end{subfigure}

    \caption{Performance under different amount of training data on ST-CTA and ST-CPA.}
    \label{fig:limited_training}
\end{figure}

\noindent \textbf{Exp-4: Varying amount of training data.}
We evaluate robustness by subsampling the ST-CTA and ST-CPA training sets to 25\%, 50\%, and 75\% while preserving full label coverage (Figure~\ref{fig:limited_training}). Representation-learning baselines are highly sensitive to reduced supervision: \tabemb{} and Watchog degrade sharply at the 25\% scale, particularly in Macro-F1. Doduo is more stable but consistently remains below the strongest methods, while \reveal{} is the most competitive baseline and closely follows \methodName{} across most settings. In contrast, \methodName{} degrades most gracefully, achieving the highest or near-highest Micro-F1 and Macro-F1 across both tasks and training scales. These results demonstrate its strong robustness under limited supervision.

\noindent \textbf{Exp-5: Varying number of evolution rounds.}
Figure~\ref{fig:evolution_rounds} reports test Micro-F1 and Macro-F1 across evolution rounds on ST-CTA and ST-CPA. On ST-CTA, both metrics increase steadily from the raw Round-0 model through five evolution rounds, with calibrated Micro-/Macro-F1 reaching 81.79\%/79.49\%. On ST-CPA, most improvements occur within the first two rounds, after which both metrics remain stable. Such non-monotonicity is expected because evolution states are selected using validation evidence rather than test performance. Overall, both tasks converge within five rounds, supporting the evolution round used in the main experiments.

\begin{figure}[t]
    \centering
    \begin{subfigure}[t]{0.23\textwidth}
        \centering
        \includegraphics[width=\linewidth]{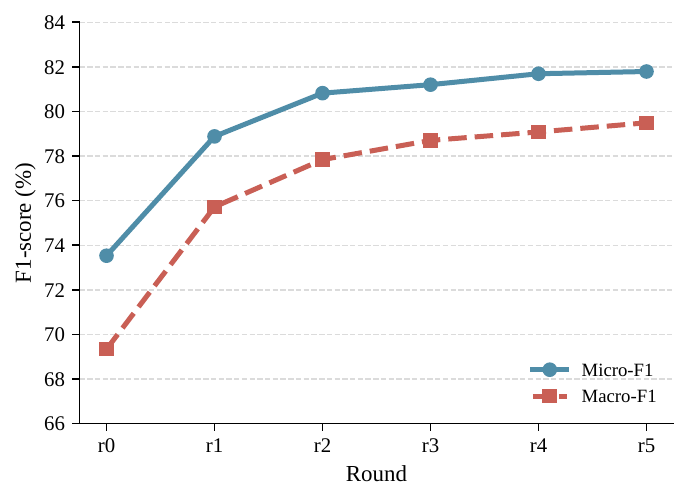}
        \caption{CTA on ST-CTA.}
        \label{fig:evolution_rounds_cta}
    \end{subfigure}
    \hfill
    \begin{subfigure}[t]{0.23\textwidth}
        \centering
        \includegraphics[width=\linewidth]{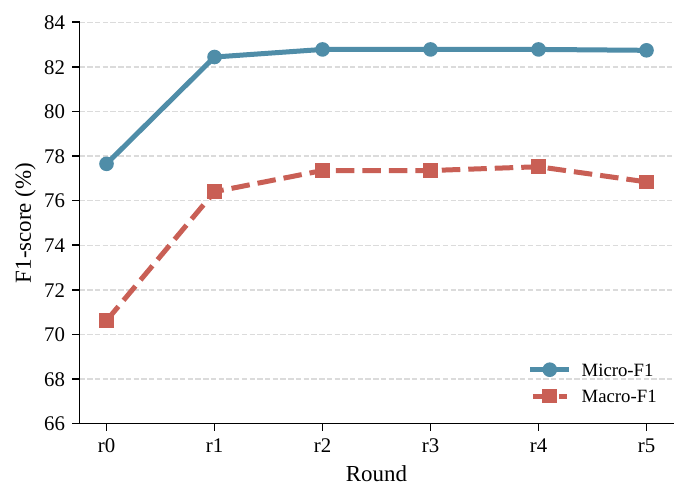}
        \caption{CPA on ST-CPA.}
        \label{fig:evolution_rounds_cpa}
    \end{subfigure}
    \caption{Evolution round-level test Micro-/Macro-F1.}
    \label{fig:evolution_rounds}
    \vspace{-2mm}
\end{figure}

\vspace{-2mm}

\begin{figure}[t]
    \centering
    \includegraphics[width=0.88\columnwidth]{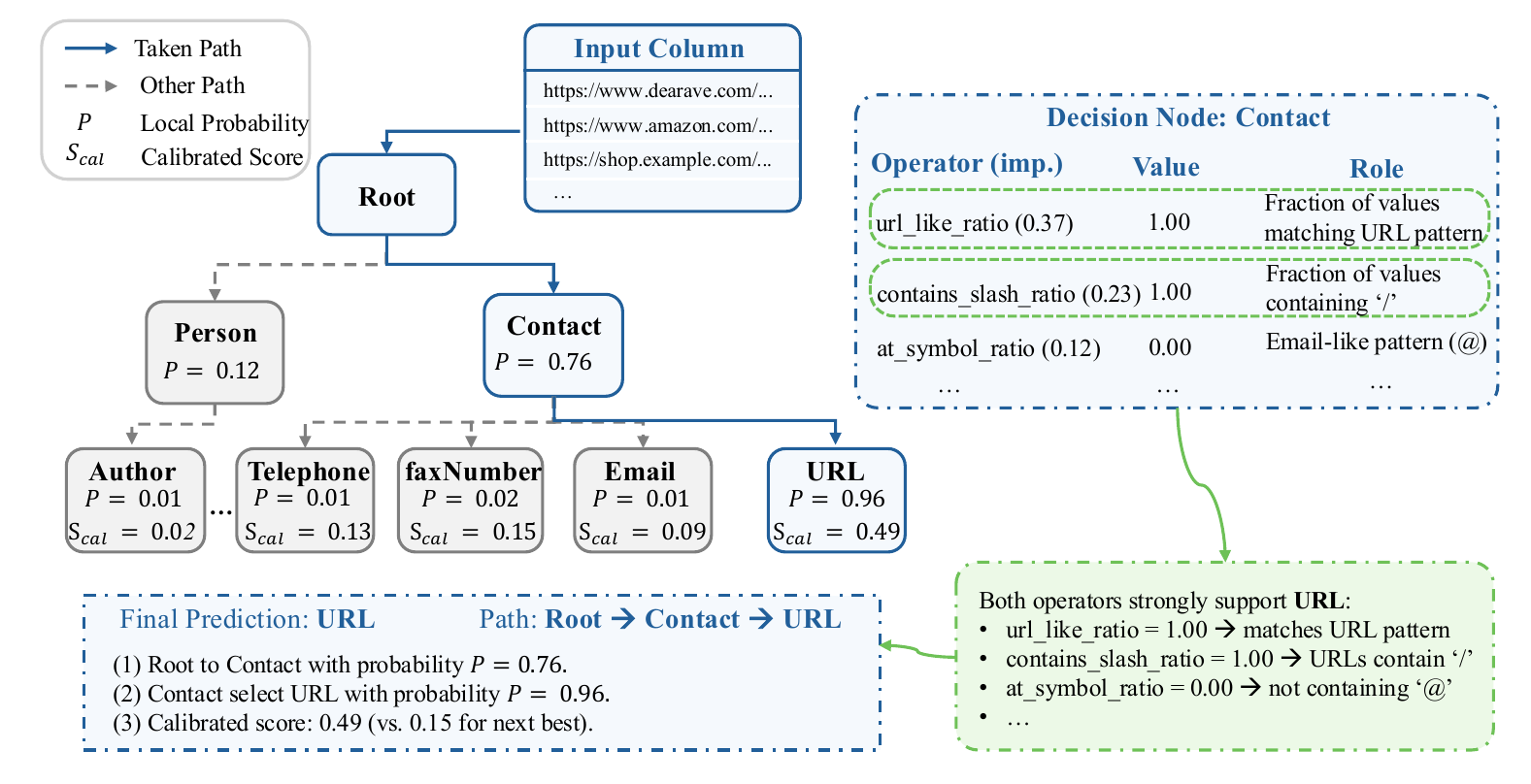}
    \caption{Example of CTA prediction. }
    \label{fig:case_study}
\end{figure}

\subsection{Interpretability Analysis.}
\noindent \textbf{Interpretable substrates.}
Each substrate exposes its node-specific symbolic evidence through named operators and their model-level importance, while each prediction remains traceable through its semantic path and the operator values observed at the local substrates. Table~\ref{tab:substrate_ops} lists the five most important operators at three representative GTD-CTA substrates. At \textit{Biology}, positional, contextual, similarity-based operators help distinguish taxonomic ranks from biological names. At \textit{Person}, contextual, statistical, and positional operators separate closely related person roles such as authors and directors. At \textit{Organization}, contextual operators, including neighboring-column statistics and table width, contribute most to distinguishing labels such as \textit{company} and \textit{manufacturer}.

\begin{table}[t]
\centering
\scriptsize
\setlength{\tabcolsep}{4pt}
\renewcommand{\arraystretch}{1.15}
\caption{Five most important operators at three representative GTD-CTA
substrates.}
\label{tab:substrate_ops}
\begin{tabular}{@{}p{1.7cm}lcr@{}}
\toprule
Substrate & Operator & Type & Importance \\
\midrule
\multirow{5}{=}{\textit{Biology}\\[2pt]}
 & \texttt{relative\_column\_index}        & position & .081 \\
 & \texttt{sim\_v\_Biology\_7}           & similarity   & .045 \\
 & \texttt{neighbor\_short\_text\_count}   & context  & .042 \\
 & \texttt{title\_case\_ratio}             & format   & .037 \\
 & \texttt{sim\_v\_Biology\_2}           & similarity   & .037 \\
\midrule
\multirow{5}{=}{\textit{Person}\\[2pt]}
 & \texttt{neighbor\_avg\_string\_length\_mean} & context  & .041 \\
 & \texttt{max\_string\_length}            & length   & .034 \\
 & \texttt{relative\_column\_index}        & position & .031 \\
 & \texttt{neighbor\_numeric\_like\_count} & context  & .030 \\
 & \texttt{neighbor\_short\_text\_count}   & context  & .028 \\
\midrule
\multirow{5}{=}{\textit{Organization}\\[2pt]}
 & \texttt{neighbor\_numeric\_like\_count}  & context & .042 \\
 & \texttt{neighbor\_unique\_ratio\_mean}   & context & .035 \\
 & \texttt{neighbor\_avg\_string\_length\_mean} & context & .034 \\
 & \texttt{neighbor\_numeric\_ratio\_mean}  & context & .034 \\
 & \texttt{table\_width}                    & context & .034 \\
\bottomrule
\end{tabular}
\end{table}

\noindent \textbf{Interpretable predictions.}
We trace an ST-CTA prediction to illustrate the interpretability of
\methodName{}. Consider a column containing product-page Web addresses, such as
\texttt{https://www.dearave...}, which is correctly
annotated as \textit{URL} in Figure~\ref{fig:case_study}. The prediction follows an explicit root-to-leaf path:
the root routes the column to \textit{Contact} with probability $0.76$, after
which the local classifier selects \textit{URL} with probability $0.96$. Its raw probability is $0.76*0.96$, while its calibrated score over all labels is $0.49$, compared with $0.15$ for the next candidate.
The deciding \textit{Contact} node distinguishes \textit{URL}, \textit{telephone}, \textit{faxNumber}, and \textit{email} using explicit operators. Its two highest-ranked operators are \texttt{url\_like\_ratio} and \texttt{contains\_slash\_ratio}, both of which evaluate to $1.0$ on this column, whereas phone-like values do not activate the URL-pattern operator. The prediction is inspectable through its semantic path, the node-specific operators, their observed values, and their model-level importance.

\section{Related Work}
\label{sec:Relatedwork}

Column annotation in tabular data has evolved through three major paradigms. Early approaches perform CA using handcrafted rules, patterns, keywords, and manually designed lexical, statistical, or structural features~\cite{ramnandan2015assigning, pham2016semantic, hulsebos2019sherlock,zhang2019sato}. While effective in specific domains, they require substantial expert effort and are limited by the coverage and transferability of the predefined feature space. Transformer-based methods instead learn representations of table content with pretrained language models and fine-tune downstream CA predictors~\cite{deng2022turl, suhara2022annotating, miao2023watchog, ding2025retrieve, hoseinzade2026tabemb}. While more generalizable, they underutilize the semantic structure of the label space, and the prediction process is opaque. More recently, LLM-based methods directly predict labels through prompting~\cite{feuer2024archetype, korini2023column, korini2024column}. However, processing large tables is constrained by context windows, while relying mainly on observed values remains insufficient for distinguishing semantically related labels.


\section{Conclusion}
We presented \methodName{}, an LLM-empowered interpretable framework for CTA and CPA that materializes column annotation as a global-to-local symbolic decision process. It first induces a robust hypernym-inspired semantic skeleton over the label space through MBR-based consensus, and then materializes each internal node as an executable and evolvable predictive substrate through LLM-guided operator modification and exploration--exploitation-based selection. Experiments across multiple benchmarks demonstrate that \methodName{} is accurate, robust, and interpretable.

\clearpage
\bibliographystyle{ACM-Reference-Format}
\bibliography{ref}  

\balance
\clearpage
\appendix

\section{LLM Prompt Templates}
\label{app:prompts}

\begin{tcolorbox}[
    enhanced,
    sharp corners,
    boxrule=0.5pt,
    colback=white,
    colframe=black,
    label={box:prompt_m1},
    title=\textbf{Prompt Design A: LLM-Guided Skeleton Induction},
    fonttitle=\bfseries\sffamily,
    coltitle=black,
    attach boxed title to top left={yshift=-2mm, xshift=2mm},
    boxed title style={colback=white, colframe=white},
    drop shadow
]
\small

\begin{enumerate}[label=(\arabic*), itemsep=0pt, parsep=2pt, leftmargin=*]

    \item \textbf{Instruction:}  
    Organize the given CTA label vocabulary into a hypernym-inspired semantic
    tree that supports hierarchical column-type classification.

    \item \textbf{Input:}
    \begin{itemize}[label=$\bullet$, itemsep=0pt, leftmargin=*]
        \item The complete target label vocabulary $\mathcal{L}$;
        \item No column values or table-specific evidence.
    \end{itemize}

    \item \textbf{Guidelines:}
    \begin{itemize}[label=$\bullet$, itemsep=0pt, leftmargin=*]
        \item Construct the hierarchy solely from label semantics.
        \item Use meaningful hypernym concepts as internal nodes.
        \item Group broad property types near the root and introduce
        entity-specific groupings only when semantically appropriate.
        \item Include every input label exactly once as a leaf.
        \item Avoid unary branches, arbitrary capacity-based groups, and
        semantically uninformative node names.
        \item Restrict the tree to a maximum depth of $3$ and maximum branching
        factor of $8$.
    \end{itemize}

    \item \textbf{Output:}  
    Return one machine-readable JSON tree in which internal nodes contain
    \texttt{name} and \texttt{children}, and leaves contain the original
    \texttt{name} and \texttt{label} verbatim.

\end{enumerate}
\end{tcolorbox}
\begin{tcolorbox}[
    enhanced,
    sharp corners,
    boxrule=0.5pt,
    colback=white,
    colframe=black,
    label={box:prompt_m2},
    title=\textbf{Prompt Design B: LLM-Guided Operator Modification},
    fonttitle=\bfseries\sffamily,
    coltitle=black,
    attach boxed title to top left={yshift=-2mm, xshift=2mm},
    boxed title style={colback=white, colframe=white},
    drop shadow
]
\small

\begin{enumerate}[label=(\arabic*), itemsep=0pt, parsep=2pt, leftmargin=*]

    \item \textbf{Instruction:}  
    Diagnose a local child-routing confusion and propose interpretable operator
    modifications that provide the missing discriminative evidence.

    \item \textbf{Input:}
    \begin{itemize}[label=$\bullet$, itemsep=0pt, leftmargin=*]
        \item The substrate evidence profile, including routing accuracy,
        confused child pairs, and representative misclassified values;
        \item The current node-specific operator set;
        \item Previous effective, neutral, and risky modifications from the
        evolution memory.
    \end{itemize}

    \item \textbf{Guidelines:}
    \begin{itemize}[label=$\bullet$, itemsep=0pt, leftmargin=*]
        \item Contrast the empirical patterns of misrouted and correctly routed
        instances.
        \item Combine the observed evidence with prior knowledge of how the
        corresponding semantic types are commonly represented.
        \item Propose \texttt{add}, \texttt{update}, or \texttt{delete}
        modifications rather than directly modifying model parameters.
        \item Express each proposed operator as a scalar-valued
        \texttt{feature(values)} function.
        \item Use only the permitted DSL operations and avoid imports, file
        access, dynamic execution, and unsafe attributes.
        \item Prefer generalizable semantic patterns over rules that memorize
        the displayed examples.
    \end{itemize}

    \item \textbf{Output:}  
    Return a machine-readable JSON object containing the diagnosed semantic
    distinction, the supporting hypothesis, and one or more executable operator
    modifications with their operation type, name, specification, and code.

\end{enumerate}

\end{tcolorbox}

For reproducibility, we provide condensed sample prompts for the two LLM calls used in \methodName{}. Long label lists, representative samples, and repeated instructions are abbreviated, while the original objectives, constraints, and output schemas are preserved.

Prompt Design~A presents the global skeleton-induction prompt. The LLM receives only the target label vocabulary and organizes it into a hypernym-inspired semantic tree, without access to table values. Structural constraints require every label to appear exactly once as a leaf, ensuring that each internal node defines a valid partition of its semantic subspace.

Prompt Design~B presents the local operator modification prompt. The LLM receives a substrate evidence profile, the current operator set, and prior evolution outcomes, and then contrasts confused child groups to identify missing discriminative evidence. Proposed additions, revisions, or removals follow the restricted DSL and be expressed as sandbox-executable scalar operators. The resulting modifications are subsequently evaluated through gated empirical validation before being incorporated into the corresponding substrate.

\section{Default Operator Library}
\label{app:operators_detail}

At $r=0$, every substrate is initialized with the same deterministic default library; subsequent evolution makes its operator set node-specific. The library is organized into three shared families. \emph{Value-profile and distributional operators} summarize the target column through its number of values, missingness, uniqueness and duplication, string-length and token-count statistics, character composition, value entropy, concentration, and rarity. \emph{Surface-pattern operators} measure the fractions of values matching common semantic formats, including numeric, date, year, URL, email, phone, ISBN, capitalization, punctuation, and fixed-width identifier patterns. \emph{Contextual operators} encode table width, relative column position, and aggregate profile statistics of neighboring columns. All three families are scalar, transparent functions of the table contents and are shared across substrates.
Each substrate additionally receives \emph{node-specific semantic similarity operators}. They compare the target-column values and its neighboring-column context with TF--IDF centroids constructed from the training instances assigned to each child semantic branch. For CPA, the library further includes profiles of both columns in the ordered pair, their value overlap, token-set similarity, relative length and position, and comparisons with same-shape sibling columns.

\section{Algorithmic Summary}
\label{app:algorithms}

Algorithms~\ref{alg:global_skeleton_induction} and~\ref{alg:local_substrate_evolution} summarize the two modules using the notation in Sections~\ref{sec:module1} and~\ref{sec:module2}. Algorithm~\ref{alg:global_skeleton_induction} separates individual candidate assessment from consensus: valid LLM outputs form $\mathbb{T}$, their semantic--structural costs induce the reliability weights $w_k$, and MBR selects an existing candidate with minimum quality-weighted structural disagreement.

\begin{algorithm}[h]
\small
\caption{Module~1: Global Skeleton Induction}
\label{alg:global_skeleton_induction}
\LinesNumbered
\DontPrintSemicolon
\KwIn{label set $\mathcal{L}$, LLM $f_{\mathrm{LLM}}$, candidate count $K$}
\KwOut{consensus skeleton $\mathcal{T}^{\star}$}
\For{$k\leftarrow 1$ \KwTo $K$}{
  $p^{(k)}\leftarrow\operatorname{PromptVariant}(\mathcal{L},k)$\;
  $\mathcal{T}^{(k)}\leftarrow\operatorname{Parse}\!\left(f_{\mathrm{LLM}}(p^{(k)})\right)$\;
  \If{$\operatorname{Valid}(\mathcal{T}^{(k)},\mathcal{L})$}{
    $\mathbb{T}\leftarrow\mathbb{T}\cup\{\mathcal{T}^{(k)}\}$\;
  }
}
\ForEach{$\mathcal{T}^{(k)}\in\mathbb{T}$}{
  $c_k\leftarrow\mathrm{Cost}_{\mathrm{sem}}(\mathcal{T}^{(k)})+\mathrm{Cost}_{\mathrm{str}}(\mathcal{T}^{(k)})$\;
}
$w_k\leftarrow\exp(-\eta c_k)\,/\,\sum_{\mathcal{T}^{(j)}\in\mathbb{T}}\exp(-\eta c_j)$\quad$\forall\mathcal{T}^{(k)}\in\mathbb{T}$\;
$\displaystyle\mathcal{T}^{\star}\leftarrow\arg\min_{\mathcal{T}^{(k)}\in\mathbb{T}}\sum_{\mathcal{T}^{(j)}\in\mathbb{T}}w_j\,d_{\mathrm{tree}}\!\left(\mathcal{T}^{(k)},\mathcal{T}^{(j)}\right)$\;
\Return{$\mathcal{T}^{\star}$}\;
\end{algorithm}


\begin{algorithm}[h]
\small
\caption{Module~2: Local Substrate Evolution}
\label{alg:local_substrate_evolution}
\LinesNumbered
\DontPrintSemicolon
\KwIn{$\mathcal{T}^{\star}$, $\mathcal{D}_{\mathrm{tr}}$,
$\mathcal{D}_{\mathrm{va}}$, $f_{\mathrm{LLM}}$, rounds $R$, budget $B$}
\KwOut{selected substrate states $\{\mathcal{S}_v^{(r^{\star})}\}$}

$\{\mathcal{S}_v^{(0)},\mathcal{H}_v\}_{v\in\mathcal{V}_{\mathrm{int}}}
\leftarrow
\operatorname{Initialize}(\mathcal{T}^{\star},\mathcal{D}_{\mathrm{tr}})$\;

\For{$r\leftarrow0$ \KwTo $R-1$}{
  $\{\mathcal{S}_v^{(r+1)}\}\leftarrow\{\mathcal{S}_v^{(r)}\}$\;
  $(\mathcal{V}_{\mathrm{act}},\{\mathcal{Q}_v\})
  \leftarrow
  \operatorname{DiagnoseAndPropose}
  (\{\mathcal{S}_v^{(r)}\},\mathcal{D}_{\mathrm{va}},
  f_{\mathrm{LLM}},\{\mathcal{H}_v\})$\;

  \For{$b\leftarrow1$ \KwTo $B$}{
    \If{$\forall v,\mathcal{Q}_v=\emptyset$}{\textbf{break}\;}
    $(v,q)\leftarrow
    \operatorname{UCBSelectAndPop}
    (\{\mathcal{Q}_v\},\{N_v,\overline{R}_v\},\tau)$\;

    \If{$\operatorname{Gate}(q,\mathcal{S}_v^{(r+1)})$}{
      $(\widetilde{\mathcal{S}}_v,\Delta a)
      \leftarrow
      \operatorname{RetrainAndValidate}
      (\mathcal{S}_v^{(r+1)},q,
      \mathcal{D}_{\mathrm{tr}},\mathcal{D}_{\mathrm{va}})$\;
      $(\mathcal{S}_v^{(r+1)},N_v,\overline{R}_v,\tau,\mathcal{H}_v)
      \leftarrow
      \operatorname{Update}
      (\mathcal{S}_v^{(r+1)},\widetilde{\mathcal{S}}_v,
      N_v,\overline{R}_v,\tau,\mathcal{H}_v,q,\Delta a)$\;
    }
  }
}

$r^{\star}\leftarrow
\arg\max_{0\leq r\leq R}
\operatorname{Fitness}_{\mathrm{va}}
(\{\mathcal{S}_v^{(r)}\})$\;
\Return{$\{\mathcal{S}_v^{(r^{\star})}\}$}\;
\end{algorithm}

Given the fixed $\mathcal{T}^{\star}$, Algorithm~\ref{alg:local_substrate_evolution} first materializes every internal node at $r=0$. Each subsequent round diagnoses active substrates, constructs their modification queues, and allocates at most $B$ gate-passing retraining trials. The scheduler gives untried queues initial coverage and then uses $\overline{R}_v+b_\tau(v)$ to balance prior gains and exploration; the validation-selected state is returned after $R$ rounds.

\section{Proof}
\subsection{Proof of Lemma~\ref{prop:mbr_medoid}}
\label{app:mbr_medoid}

 \begin{proof}
  Let $p\geq\rho$ be the weighted mass within distance $\delta$ of $\bar{\mathcal{T}}$. Splitting the risk of $\bar{\mathcal{T}}$ into this neighborhood and its complement gives
  \[
  \mathrm{Risk}_{\eta}(\bar{\mathcal{T}})\leq p\delta+(1-p)D_{\mathbb{T}}\leq\rho\delta+(1-\rho)D_{\mathbb{T}},
  \]
  where the second inequality uses $p\geq\rho$ and $\delta\leq D_{\mathbb{T}}$. Since $\mathcal{T}^{\star}$ minimizes $\mathrm{Risk}_{\eta}$ over $\mathbb{T}$, the same upper bound holds for $\mathrm{Risk}_{\eta}(\mathcal{T}^{\star})$, proving
  part~(i). For part~(ii), let $\Delta=d_{\mathrm{tree}}(\mathcal{T}',\bar{\mathcal{T}})$. Every candidate in the $\delta$-neighborhood of $\bar{\mathcal{T}}$ is at least $\Delta-\delta$ away from $\mathcal{T}'$ by the triangle inequality. Hence
  \[
  \mathrm{Risk}_{\eta}(\mathcal{T}')\geq\rho(\Delta-\delta).
  \]
  When $\Delta>2\delta+\frac{1-\rho}{\rho}D_{\mathbb{T}}$, this lower bound exceeds $\rho\delta+(1-\rho)D_{\mathbb{T}}$, which is an upper bound on $\mathrm{Risk}_{\eta}(\mathcal{T}^{\star})$. Therefore, $\mathcal{T}'$ cannot minimize
  Eq.~\eqref{eq:mbr_medoid}.
  \end{proof}

\subsection{Proof of Lemma~\ref{lem:exploration_dominance}}
\label{app:exploration_dominance}

\begin{proof}
Since every reward lies in $[0,R_{\max}]$, the empirical means satisfy $\overline{R}_v-\overline{R}_u\geq -R_{\max}$. Therefore,
\[
\bigl(\overline{R}_v+b_\tau(v)\bigr)-\bigl(\overline{R}_u+b_\tau(u)\bigr)
=
\overline{R}_v-\overline{R}_u+b_\tau(v)-b_\tau(u)
\geq
-R_{\max}+b_\tau(v)-b_\tau(u).
\]
When $b_\tau(v)-b_\tau(u)>R_{\max}$, the right-hand side is positive, proving that the score of $v$ exceeds that of $u$. If the inequality holds for every other active queue, $v$ attains the maximal scheduling score and is selected.
\end{proof}

\section{Additional Experimental Evaluations}
\noindent \textbf{Exp-6: Runtime and token efficiency.} \label{app:time_tokens} Figures~\ref{fig:time} and~\ref{fig:tokens} compare runtime and LLM token consumption, respectively. As shown in Figure~\ref{fig:time}, \methodName{} is consistently faster than \reveal{}, \tabemb{}, and direct LLM prompting, with the runtime advantage becoming more pronounced on the larger ST-CTA and ST-CPA benchmarks. Although Doduo remains faster, its lower runtime is accompanied by substantially weaker annotation performance (Table~\ref{tab:overall_performance}).Unlike direct prompting, which consumes tokens for every prediction, \methodName{} uses LLMs only during skeleton induction and substrate evolution. Figure~\ref{fig:tokens} shows that this design leads to lower token consumption on the larger ST benchmarks, especially compared with five-shot prompting. Overall, \methodName{} provides a favorable balance among runtime, LLM cost, and annotation accuracy.

\begin{figure}[t]
    \centering
    \begin{subfigure}[t]{0.23\textwidth}
        \centering
        \includegraphics[width=\linewidth]{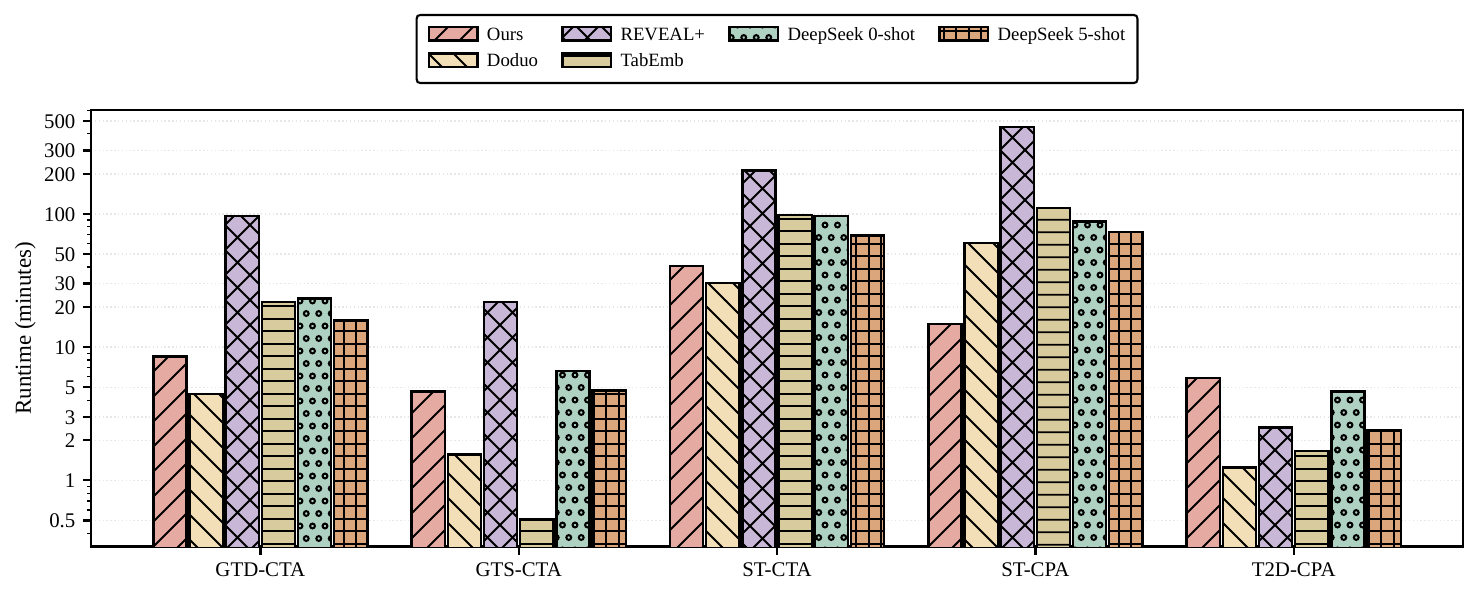}
        \caption{Runtime.}
        \label{fig:time}
    \end{subfigure}
    \hfill
    \begin{subfigure}[t]{0.23\textwidth}
        \centering
        \includegraphics[width=\linewidth]{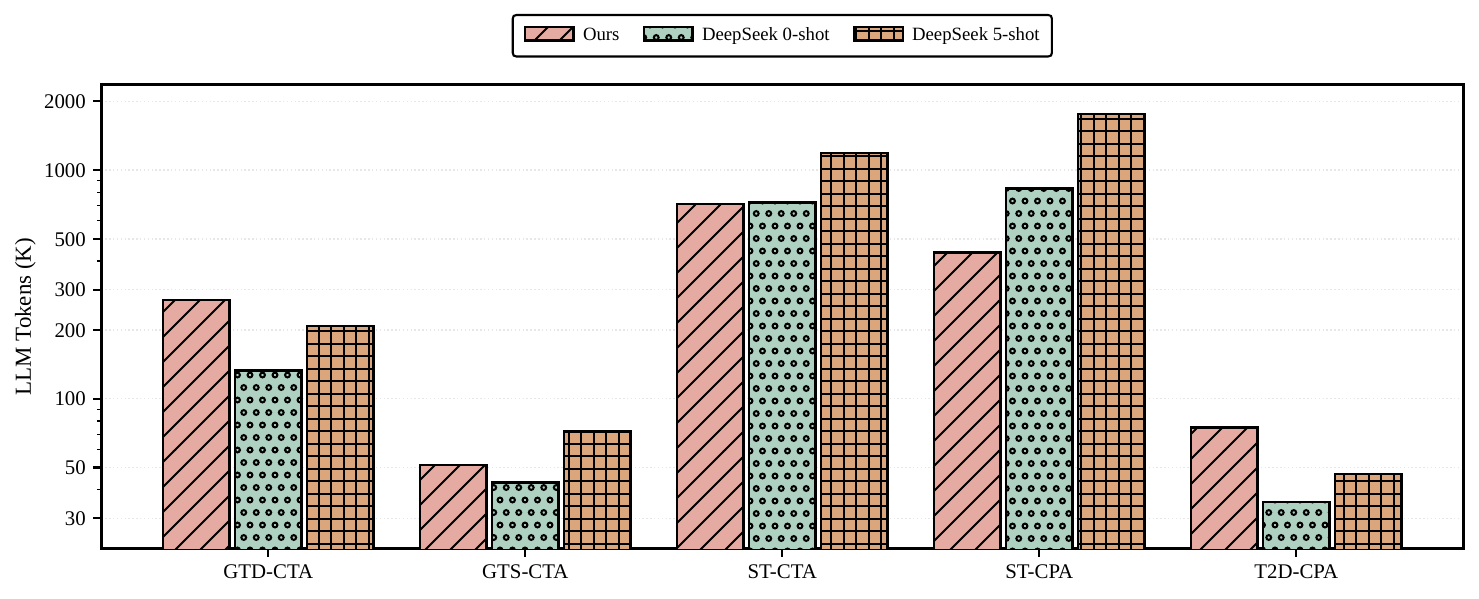}
        \caption{LLM token consumption.}
        \label{fig:tokens}
    \end{subfigure}
    \caption{Efficiency comparison across CTA and CPA datasets.}
    \label{fig:efficiency}
\end{figure}

\begin{figure}[t]
    \centering
    \includegraphics[width=0.4\columnwidth]{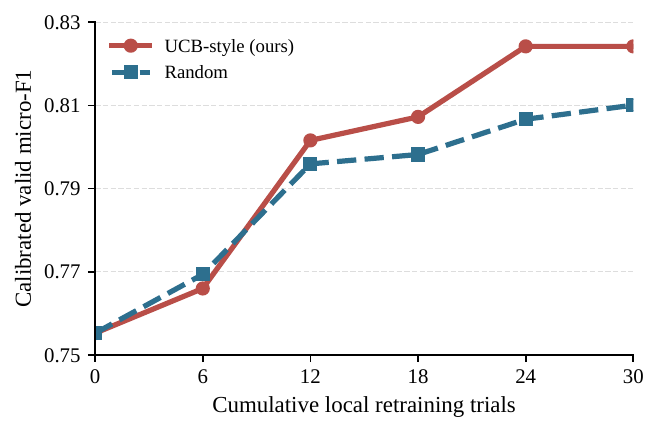}
    \caption{MAB-based validation-budget allocation on ST-CTA. Both schedulers receive six local retraining trials per round. The UCB-style scheduler reaches the same calibrated validation Micro-F1 with fewer trials than random scheduling.}
    \label{fig:mab_budget_allocation}
\end{figure}

\noindent \textbf{Exp-7: Effectiveness of Budgeted Explore--Exploit Selection.}
\label{app:mab_budget_allocation}
We isolate the contribution of the budgeted explore--exploit selection strategy in local substrate evolution by comparing the UCB-based scheduler with random selection on ST-CTA. Both methods receive the same budget of six substrate--proposal validation trials per evolution round. Figure~\ref{fig:mab_budget_allocation} reports the calibrated out-of-fold validation Micro-F1 used for selecting the inherited substrate state. The UCB-based scheduler exceeds 0.80 after 12 trials, whereas random selection requires 24 trials to reach the same level. After 30 trials, the two methods achieve 0.8242 and 0.8101, respectively. These results show that the proposed scheduler improves validation efficiency under a fixed retraining budget by allocating trials to more promising substrate based on prior outcomes.

\end{document}